\title{Credit Attribution and Stable Compression}
\author{Roi Livni\thanks{Tel Aviv University. \texttt{rlivni@tauex.tau.ac.il}.} \\
\and Shay Moran\thanks{
Technion and Google Research. \texttt{smoran@technion.ac.il}.}
\and Kobbi Nissim\thanks{
Georgetown University and Google Research. \texttt{kobbi.nissim@georgetown.edu}.} \\
\and Chirag Pabbaraju\thanks{Stanford University. \texttt{cpabbara@cs.stanford.edu}.}}
\begin{document}

\maketitle

\begin{abstract}

Credit attribution is crucial across various fields. In academic research, proper citation acknowledges prior work and establishes original contributions. Similarly, in generative models, such as those trained on existing artworks or music, it is important to ensure that any generated content influenced by these works appropriately credits the original creators.

We study credit attribution by machine learning algorithms. We propose new definitions--relaxations of Differential Privacy--that weaken the stability guarantees for a designated subset of $k$ datapoints. These $k$ datapoints can be used non-stably with permission from their owners, potentially in exchange for compensation. Meanwhile, each of the remaining datapoints is guaranteed to have no significant influence on the algorithm's output.

Our framework extends well-studied notions of stability, including Differential Privacy ($k = 0$), differentially private learning with public data (where the $k$ public datapoints are fixed in advance),
and stable sample compression (where the $k$ datapoints are selected adaptively by the algorithm).
We examine the expressive power of these stability notions within the PAC learning framework, provide a comprehensive characterization of learnability for algorithms adhering to these principles, and propose directions and questions for future research.

\end{abstract}

\section{Introduction}

Many tasks that use machine learning algorithms require proper \emph{credit attribution}. For example, consider a model trained on scientific papers that needs to reason about facts and figures based on existing literature. Most academic literature is protected under copyright licenses such as CC-BY~4.0 which allows adapting, remixing, transforming, and to copy and redistribute in any medium or format, as long as attribution is given to the creator. In another setting, a learner generating content, such as images or music, may benefit from creating \emph{derivative works} from copyrighted materials without violating the creator's rights (either through proper attribution or monetary compensation, depending on the context and licensing).

The increasing use of ML algorithms and the need for greater transparency is reflected by the recently implemented EU AI Act, which mandates the disclosure of training data \cite{IViR2023}. However, disclosure of training data and proper attribution are not necessarily equivalent. In particular, mere transparency of the dataset does not reveal whether certain elements of certain content have been derived, nor does it provide proper attribution when particular content is heavily built upon. Therefore, there is a need to develop more nuanced notions and definitions that enable learning under the constraint that works are properly attributed. This paper focuses on this challenge, exploring theoretical models of \emph{credit attribution} to provide rigorous and meaningful definitions for the task.

Credit attribution is part of a much larger problem of learning under \emph{copyright} constraints. 
Copyright issues in machine learning models are becoming increasingly prominent as these models are trained on vast amounts of data, often some of which is copyrighted. 
Consequently, the resulting models might contain content from copyrighted data in their training sets. 
Previous work suggests it may be mathematically challenging to capture algorithms that protect copyright. Specifically, attempts to regulate copyright often focus on protecting against \emph{substantial similarity} between output content and training data by, for example, employing stable algorithms that are not sensitive to individual training points \cite{bousquet2020synthetic, vyas2023provable, scheffler2022formalizing}. 
This is an important aspect of copyright; however, substantial similarity is only one piece of the puzzle.

Another piece of the puzzle involves the use of original elements from copyrighted works in a legally permissible manner, such as through \emph{de minimis quotations}, transformative use, and other types of \emph{fair uses}, such as learning and research \cite{elkin2023can}. To fully utilize ML in many practical scenarios, it is desirable for learning models to be allowed to use original elements in a similar manner.

To address this second piece, we focus on designing algorithms that, while allowed to use and be influenced by copyrighted material, must provide proper attribution. Such models would enable users to inspect these influences and verify that they conform to legal standards, or take necessary measures (such as monetary compensation or requesting permission). Despite credit attribution being narrower in scope than copyright protection in general,
even this concept may be nuanced to be captured mathematically. Therefore, we focus on formalizing a specific (but arguably basic) aspect of it -- \emph{counterfactual attribution}:
\begin{quote}
This principle asserts that any previous work that influenced the result should be credited. Counterfactually, if the creator of a work \(W\) does not acknowledge another work \(W'\), they should be able to produce \(W\) as if they had no knowledge of \(W'\). 
\end{quote}

For example, an argument based on this principle in the extreme case when \(W = W'\) is found in a U.S.\ Supreme Court opinion:
\begin{quote}
\emph{“\ldots a work may be original even though it closely resembles other works, so long as the similarity is fortuitous, not the result of copying. To illustrate, assume that two poets, each ignorant of the other, compose identical poems. Neither work is novel, yet both are original\ldots”}
\end{quote}
\begin{flushright}
\textit{--- Feist Publications, Inc.\ v.\ Rural Telephone Service Company, Inc.\ 499 U.S.\ 340 (1991)}
\end{flushright}

\section{Definitions and Examples}
\label{sec:defs}
In this section, we introduce the two main definitions we study.

We first recall some standard notation from learning theory and differential privacy. Let $\mcZ$ be an input data domain and $\C$ denote an output space. We denote by $\mcZ^\star$ the set of all finite sequences with elements from~$\mcZ$. Two sequences $S', S'' \in \X^\star$ are called neighbors if they have the same length $\lvert S'\rvert = \lvert S''\rvert$ and there is a unique index $i$ such that $S'_i \neq S''_i$.
Let $\eps, \delta > 0$ and let $p, q$ be probability distributions defined over the same space. We let $p \approx_{\eps, \delta} q$ denote the following relation: $p(E) \leq \exp(\eps) \cdot q(E) + \delta$ and $q(E) \leq \exp(\eps) \cdot p(E) + \delta$ for every event $E$.

\paragraph{Algorithms with Credit Attribution.}
Consider a mechanism $M: \mcZ^\star \to \C \times \mcZ^\star$ that, for every possible input sequence $S = (z_1, \ldots, z_n)$, outputs a pair $(c, R)$, where $c \in \C$ and $R \in \mcZ^\star$. Intuitively,~$R$ is the list %
of inputs being credited by the mechanism, and $c$ is the model/content produced by the mechanism. Thus, we require that each data point $z_i \in R$ is also an input data point $z_i \in S$. For such a mechanism $M$ and an input sequence $S$, we let $M(S)$ denote the probability distribution over outputs of $M$ given $S$ as input, where the probability is induced by the internal randomness of the mechanism. For example, if $M$ is deterministic, then $M(S)$ is a Dirac distribution (i.e., it assigns probability $1$ to the deterministic output of $M$ on $S$).

The definition below uses the following notation:
for a sequence $S=(z_1, \ldots, z_n)$ and an index $i \in [n]$, we let $S_{-i}$ denote the subsequence of~$S$ obtained by omitting~$z_i$.
Let $z_i \in S$ be a data point such that $\Pr_{(c, R) \sim M(S)}[z_i \in R] < 1$. That is, there is a positive probability that $z_i$ is \underline{not} credited by~$M$ when executed on $S$. In this case, we let $M(S^{-i})$ denote the distribution of $M(S)$ conditioned on the event that~$z_i \notin R$. We are now ready to present our first definition\footnote{
In analogy to the variant of differential privacy where the unit of protection is the addition or removal of a data point, our definition uses omissions of data points. This aligns with our motivation for counterfactual  credit attribution: if a non-credited data point is omitted (rather than replaced), the output does not change. Omission is crucial here, as replacing a non-credited data point with a credited one could drastically alter the output.} of counterfactual credit attribution.

\begin{framed}
\vspace{-2mm}
\begin{definition}[Counterfactual  Credit Attribution]\label{def:1}
Let $\eps,\delta>0$. A mechanism $M: \mcZ^\star \to \C \times \mcZ^\star$ is called an $(\eps,\delta)$-counterfactual  credit attributor (CCA) if for every input sequence $S=(z_1,\ldots, z_n)$ and every index $i\in [n]$ the following holds: either $\Pr_{(c,R)\sim M(S)}[z_i\in R] = 1$, or
    \[M(S^{-i}) \approx_{\eps,\delta} M(S_{-i}),\]
    where \(M(S_{-i})\) is the output distribution on the dataset $S_{-i}=S\setminus\{z_i\}$, and \(M(S^{-i})\) is the output distribution on the dataset $S$, conditioned on $z_i\notin R$.
\end{definition}    
\vspace{-2mm}
\end{framed}
To emphasize, in \Cref{def:1} the conditional output distribution \(M(S^{-i})\) models the condition ``\emph{if data-point \(z_i\) is not credited by~\(M\)},'' whereas the output distribution \(M(S_{-i})\) represents the counterfactual scenario ``\emph{had the data-point~\(z_i\) not been seen by \(M\)}.''

\begin{example}[Stable Sample Compression \cite{nikita2017optimal, bousquet2020proper}]
\label{example:stable-sample-compression}
A mechanism $A:\mcZ^\star\to \C$ is a stable sample compression scheme of size $k$ if for every input sequence $S=(z_1,\ldots,z_n)$ there is a subsequence $\kappa(S)\subseteq S$ of size $\lvert \kappa(S)\rvert\leq k$ such that $A(S)=A(T)$ for every intermediate subsequence $\kappa(S)\subseteq T\subseteq S$. See Figure~\ref{fig:svm} for an example.

Each stable compression scheme corresponds to an $(\eps = 0, \delta = 0)$-CCA which credits the datapoints in $\kappa(S)$. That is, $M(S) = (A(S), \kappa(S))$. Stable sample compression thus provides something stronger: group-counterfactuality, meaning any subset of datapoints that is not selected does not influence the output.
\end{example}

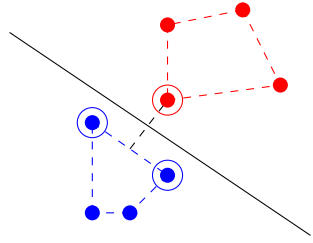
\begin{figure}
\centering
\textbf{\small Example: Support Vector Machine}\par\bigskip   
\begin{tikzpicture}[scale=1] 

\fill[blue] (2,1) circle (0.1);
\fill[blue] (2,2.2) circle (0.1);
\fill[blue] (3,1.5) circle (0.1);
\fill[blue] (2.5,1) circle (0.1);
\draw[dashed,blue] (2,1) -- (2,2.2);
\draw[dashed,blue] (2,2.2) -- (3,1.5);
\draw[dashed,blue] (3,1.5) -- (2.5,1);
\draw[dashed,blue] (2.5,1) -- (2,1);
\draw[dashed,black] (2.5,1.85) -- (3,2.5);

\fill[red] (3,3.5) circle (0.1);
\fill[red] (3,2.5) circle (0.1);
\fill[red] (4.5,2.7) circle (0.1);
\fill[red] (4,3.7) circle (0.1);
\draw[dashed,red] (3,3.5) -- (3,2.5);
\draw[dashed,red] (3,2.5) -- (4.5,2.7);
\draw[dashed,red] (4,3.7) -- (4.5,2.7);
\draw[dashed,red] (4,3.7) -- (3,3.5);

\draw[solid] (0.9,3.4) -- (4.9,0.7);

\draw[blue] (2,2.2) circle (0.2);
\draw[blue] (3,1.5) circle (0.2);
\draw[red] (3,2.5) circle (0.2);

\end{tikzpicture}
\caption{\small Support Vector Machine (SVM) as an $(\eps=\delta=0)$-counterfactual  credit attributor:
The SVM algorithm identifies a maximum-margin separating hyperplane, which is defined by the subsample of the support vectors. Any input point which is not a support vector does not influence the output: even if it is removed from the input sample, the output hyperplane does not change.
}
\label{fig:svm}
\end{figure}

\Cref{def:1} not only relaxes stable sample compression, but also extends the concept of differential privacy with public data, known as semi-private learning. In semi-private learning, the learner's input includes public examples (which can be processed non-stably) and private examples (for which the algorithm must satisfy differential privacy guarantees). Semi-private learning \cite{beimel2013private, alon2019limits} has been extensively studied in recent years \cite{lowy2023optimal}, for example, in the context of query release \cite{bassily2020private, liu2021leveraging}, distribution learning \cite{bie2022private, ben2024private}, computational efficiency \cite{block2024oracle, pinto2024pillar}, as well as in other contexts.

\begin{definition}[Semi-Differentially Private Mechanism]\label{def:2}
Let $\eps,\delta>0$; an $(\eps,\delta)$-semi differentially private (semi-DP) mechanism is a mapping $M:\mcZ^\star\times\mcZ^\star\to \C$ such that for every $S_{\mathtt{pub}}\in \mcZ^\star$ and every pair of neighboring sequences \(,S_\mathtt{priv}',S_\mathtt{priv}''\):\footnote{Note that the special case of $S_{\mathtt{pub}}=\emptyset$ gives a DP mechanism.}
\[M(S_\mathtt{pub},S_\mathtt{priv}')\approx_{\eps,\delta} M(S_\mathtt{pub},S_\mathtt{priv}'').\]
\end{definition}

\begin{remark}
    Any semi-DP mechanism $M$ that uses $k$ public points can be turned into a CCA mechanism as follows: on an input sequence $S$, the CCA mechanism outputs $(c, R)$, where $R=S_{\le k}$, and $c=M(S_{\le k}, S_{>k})$. That is, $M$ uses the first $k$ points in $S$ as public data, and the rest are private.
\end{remark}

Private learning with public data is sometimes likened to semi-supervised learning, where private data corresponds to unlabeled data and public data to labeled data. In both scenarios, the learner accesses many less informative examples (unlabeled or private) and fewer more informative examples (labeled or public). Expanding on this analogy, \Cref{def:1} is akin to active learning, where the learner adaptively chooses which data points to credit, similar to selecting which data points to label in active learning.

Semi-differential privacy (\Cref{def:2}) provides stronger stability guarantees than counterfactual  credit attribution (\Cref{def:1}), including for the selection process. In contrast, \Cref{def:1} allows for a highly non-stable selection process (e.g., SVM). This leads us to consider a more direct hybrid of semi-DP and sample compression, suggesting the following definition:

\begin{framed}
\vspace{-2mm}
\begin{definition}[Sample DP-Compression Scheme]\label{def:3}
Let \(\eps, \delta \geq 0\) and \(k \leq n\). An \((\eps, \delta)\) sample differentially private \((n \to k)\)-compression scheme is a mechanism \(M : \mathcal{Z}^n \to \mathcal{C}\) which consists of two functions:
\begin{enumerate}
    \item \textbf{Compression:} an \((\eps, \delta)\)-DP mechanism \(\kappa : \mathcal{Z}^n \to [n]^k\), called the compression function, and
    \item \textbf{Reconstruction:} an \((\eps, \delta)\) semi-DP mechanism \(\rho : \mathcal{Z}^\star \times \mathcal{Z}^\star \to \mathcal{C}\) called the reconstruction function.
\end{enumerate}
Then, for every input sequence \(S\):
\[ M(S) = \rho(S\vert_{\kappa(S)}, S\vert_{\lnot \kappa(S)}), \]
where \(S\vert_{\kappa(S)} = (S_i)_{i \in \kappa(S)}\) and \(S\vert_{\lnot \kappa(S)} = (S_i)_{i \notin \kappa(S)}\).
\end{definition}
\vspace{-2mm}    
\end{framed}
Note that the compression function $\kappa$ selects the \underline{indices} of the compressed subsample (rather than the subsample itself, as in classical sample compression). This technical difference allows us to pose the requirement of differential privacy on the compression function $\kappa$. Going back to the analogy with active learning, \Cref{def:2} also imposes stability of the labeling function (i.e.\ the function that decides which labels to query).

\begin{example}[Randomized Response]\label{ex:randresp}
We next describe a simple task which can be performed by sample DP-compression schemes, but not by semi-DP mechanisms.
Imagine that the data is drawn from a distribution where each datapoint is useful with probability 0.1 and is otherwise garbage with probability 0.9. The goal is to select $k$ datapoints while maximizing the number of useful datapoints that are selected. If we select datapoints obliviously, for example by simply taking the first $k$ examples, we would expect that only about $10\%$ of them will be useful. However, by using a mechanism compliant with \Cref{def:3}, we can %
increase the proportion of useful examples. 

This mechanism is based on randomized response and operates as follows: each example is independently assigned a random label in $\{0,1\}$, where a useful example is assigned a label of~$1$ with probability $p > 1/2$, and each garbage example is assigned a label of~$1$ with probability~$1-p < 1/2$. The value of $p$ is set as a function of the privacy parameter $\eps$.\footnote{We get $\eps= \ln\left(\frac{p}{1-p}\right)$ and $\delta = 0$.}  Then, the compression function $\kappa$ selects the first $k$ indices whose label is $1$. 
This way, the fraction of useful points among the points labeled 1 is $\approx \frac{0.1p}{0.1p+0.9(1-p)}=\frac{1}{9/p-8}>0.1$ (the last inequality holds for $p>1/2$). 
See \Cref{sec:randomized-response} for a more detailed argument.
\end{example}

\section{Main Theorems}
In this section, we present our main theorems that characterize the expressivity of learning rules satisfying our proposed definitions. We focus on the PAC (Probably Approximately Correct) learning model \cite{valiant1984theory} and employ its standard definitions (explicitly provided in \Cref{sec:technical-background}).

\begin{question*}[Guiding Question]
Is learnability subject to counterfactual  credit attribution (\Cref{def:1}) more restricted than unconstrained learnability? Is learnability subject to sample DP-compression (\Cref{def:3}) more restricted than unconstrained learnability? How do these restrictions compare to differentially private learning?
\end{question*}

Note that with respect to both \Cref{def:1} and \Cref{def:3}, it is clear that if \(k\), the number of credited points, is sufficiently large, then it is possible to learn any PAC-learnable class \(\mathcal{C}\). Indeed, if \(k\) equals the PAC sample complexity of \(\mathcal{C}\), then an oblivious selection, such as the first \(k\) points, will suffice. Therefore, the above question is particularly interesting for values of \(k\) that are significantly smaller than the PAC sample complexity of \(\mathcal{C}\).

Our first theorem demonstrates that every PAC-learnable class can be learned using an \((\eps=0, \delta=0)\)-counterfactual  credit attribution learning rule, which selects at most a logarithmic number of sample points for attribution. Remarkably, this can be achieved using the AdaBoost algorithm.
\begin{framed}
\vspace{-2mm}
\begin{theorem}[PAC Learning with Credit Attribution $=$ PAC Learning]
\label{thm:1}
Let $\mathcal{C}$ be a concept class with VC dimension \(\mathtt{VC}(\mathcal{C}) = d < \infty\), and let \(\alpha, \beta\) denote the error and confidence parameters. Then, there exists an \((\eps=0, \delta=0)\)-CCA learning rule \(M\) that learns \(\mathcal{C}\) with sample complexity \(n=O\left(\frac{d\log(d/\alpha) + d\log(1/\beta)}{\alpha}\right)\), while selecting only \(k=O(d \log n)\) examples for attribution. 
\end{theorem}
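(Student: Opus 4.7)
The plan is to realize the CCA mechanism as a stable sample compression scheme derived from AdaBoost, and then invoke the reduction described in Example~\ref{example:stable-sample-compression} (stable compression $\Rightarrow$ $(\eps=0,\delta=0)$-CCA).

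First, I would set up a weak learner: by the fundamental theorem of PAC learning, empirical risk minimization over $\mathcal{C}$ on a sample of size $m_0 = O(d)$ drawn from any realizable distribution achieves error at most $1/2 - \gamma_0$ for some universal constant $\gamma_0 > 0$. Thus, for any reweighting of the training sample, drawing $m_0$ points i.i.d.\ from that weighted distribution and applying ERM yields a weak hypothesis with the prescribed edge. Second, I would run AdaBoost for $T = \Theta(\log n)$ rounds. By the standard AdaBoost analysis, the output $f = \mathrm{sign}\bigl(\sum_{t=1}^T \alpha_t h_t\bigr)$ then drives the training error to zero. Let $S_t \subseteq S$ denote the $m_0$ examples invoked by the weak learner in round $t$, and define the compression set $\kappa(S) := \bigcup_{t=1}^T S_t$, so that $|\kappa(S)| \leq T \cdot m_0 = O(d \log n)$.

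Third, I would verify that this construction yields a stable sample compression scheme in the sense of Example~\ref{example:stable-sample-compression}, namely, that the final hypothesis $f$ depends on $S$ only through $\kappa(S)$. The subtle point here is that the AdaBoost reweighting $D_t$ in each round depends on the entire sample, not only on $\kappa(S)$. To address this, one either (a) records the scalar weights $\alpha_t$ as $O(\log n)$ bits of side information per round (amounting to a negligible $O(\log^2 n)$ additive term) so that reconstruction from $\kappa(S)$ is $f = \mathrm{sign}\bigl(\sum_t \alpha_t \, \mathrm{ERM}(S_t)\bigr)$, or (b) uses a boost-by-majority style variant in which the weights are fixed in advance and only the weak hypotheses $h_t = \mathrm{ERM}(S_t)$ vary. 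Either variant ensures the stability property $A(T) = A(S)$ for every $\kappa(S) \subseteq T \subseteq S$.

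Fourth, I would apply the classical sample compression generalization bound: a compression scheme of size $k$ that produces a hypothesis consistent with $n$ i.i.d.\ training examples has, with probability at least $1-\beta$, true error at most $O\bigl((k \log(n/k) + \log(1/\beta))/n\bigr)$. Plugging in $k = O(d \log n)$ and inverting for $n$ yields the stated sample complexity. Finally, invoking Example~\ref{example:stable-sample-compression} converts the stable compression scheme into an $(\eps{=}0, \delta{=}0)$-CCA mechanism that credits exactly the points in $\kappa(S)$, completing the construction. The main obstacle I expect is the stability step: AdaBoost is not a priori a stable compression scheme because the distributions $D_t$ aggregate information from the entire sample, so some care (using side information or a modified boosting weighting) is needed to match Example~\ref{example:stable-sample-compression}'s definition precisely. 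Once this is done, the rest is a direct application of standard VC and compression-scheme generalization machinery.
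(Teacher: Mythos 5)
Your high-level strategy (AdaBoost-based compression, then compression implies generalization) matches the paper's, and the weak-learner setup, the $T = \Theta(\log n)$ rounds, the compression-set size $O(d\log n)$, and the final sample-complexity calculation are all essentially correct. However, the stability step has a genuine gap, and the two fixes you propose do not close it.

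The property you need (from Example~\ref{example:stable-sample-compression}) is that $A(T) = A(S)$ for \emph{every} intermediate $\kappa(S) \subseteq T \subseteq S$. Recording the mixing coefficients $\alpha_t$ as side information (fix (a)) only makes reconstruction a function of $\kappa(S)$; it does nothing to control what the compression function does on a subsample. If you remove a non-selected point $z_i$ from $S$ and re-run the boosting on $T = S_{-i}$, the example-weighting $D_1$ (uniform over $T$) already differs from $D_1$ (uniform over $S$), so the weak learners sampled in round 1 --- and hence $\kappa(T)$, the accumulated errors, all subsequent $D_t$, and the final output --- can be entirely different. The same objection applies to fix (b): a boost-by-majority schedule fixes the \emph{mixing coefficients} in advance, but the per-example distribution $D_t$ still depends on which examples in the current sample are misclassified by the accumulated hypothesis, i.e.\ on the full sample. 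Neither fix gives you $\kappa(T)=\kappa(S)$.

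The paper escapes this by not using deterministic stable compression at all. It invokes the \emph{randomized} stable compression framework of \cite{da2024boosting} (Definition~\ref{def:stable-randomized-sample-compression}), where stability is a distributional statement: for $S' \sqsubseteq S$, the distribution of $\kappa(S')$ must equal the distribution of $\kappa(S)$ conditioned on $\kappa(S) \sqsubseteq S'$. This weaker-looking property is exactly what $(\eps=0,\delta=0)$-CCA needs, since CCA compares $M(S)$ conditioned on $z_i \notin R$ against $M(S_{-i})$, both as distributions. The AdaBoost variant in \cite{da2024boosting} is carefully constructed (via shared randomness across sample sizes) so that this conditional-distribution identity holds; it is not a property that falls out of vanilla AdaBoost or boost-by-majority with bookkeeping. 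Concretely, you would need to argue that the joint distribution of (selected indices, output hypothesis) under the coupling in $\mcD_\kappa$ factors correctly when you drop a non-selected point --- that is the technical content you are missing, and it is supplied by Lemma~\ref{lemma:adaboost-is-srcs} together with the stability clause of Definition~\ref{def:stable-randomized-sample-compression}, not by either of your proposed patches.
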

\vspace{-2mm}
\end{framed}
We leave as an open question whether \(k\) can be made independent of \(n\), possibly by allowing \(\eps\) and~\(\delta\) to be positive. Note that an affirmative answer to this question might also shed light on the sample compression conjecture \cite{littlestone1986relating, warmuth2003compressing}.

Our second theorem establishes a limitation for sample DP-compression schemes, showing that they do not offer more expressivity than differentially private PAC learning \cite{kasiviswanathan2011can}.
\begin{framed}
\vspace{-2mm}
\begin{theorem}[Sublinear Sample DP-Compression $=$ DP Learning]\label{thm:2}
Every concept class $\mathcal{C}$ satisfies exactly one of the following:
\begin{enumerate}
    \item $\mathcal{C}$ is learnable by a DP-learner.
    \item Any sample DP-compression scheme that learns $\mathcal{C}$ has size at least $k=\Omega(1/\alpha)$.
\end{enumerate}
\end{theorem}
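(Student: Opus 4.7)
My plan is to prove the dichotomy by establishing its nontrivial direction: I would show that the existence of an $(\eps,\delta)$-sample DP-compression scheme of size $k = o(1/\alpha)$ that learns $\mathcal{C}$ implies that $\mathcal{C}$ is DP-learnable. The other direction is immediate, since any DP-learner is a sample DP-compression scheme with $k=0$, so items (1) and (2) are mutually exclusive. The two components of the scheme are already ``almost'' private in the appropriate sense---$\kappa$ is $(\eps,\delta)$-DP and $\rho$ is semi-DP in its private argument---so what remains is to mask the dependence of the output on the $k$ ``public'' coordinates selected by $\kappa$.

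The tool I would use is privacy amplification by subsampling. Given a larger dataset $\hat{S}$ of size $N \gg n$, invoke $M$ on a uniformly random subsample $T$ of size $n$ drawn from $\hat{S}$. By the symmetry of uniform subsampling (combined with symmetrizing $\kappa$ if needed), any fixed point of $\hat{S}$ lies in the compressed set $\kappa(T)$ with probability at most $k/N$. On the complementary event, the point sits in the private input of $\rho$, where the semi-DP guarantee bounds its influence; and the DP property of $\kappa$ itself is amplified by the subsampling rate $n/N$. A coupling between the executions on neighboring $\hat{S}$ and $\hat{S}'$ should combine these three observations into an $(\eps',\delta')$-DP bound on the composite mechanism, with $\delta' = O(k/N + \delta n /N)$ and $\eps'$ amplified from $\eps$. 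Since $M$ already PAC-learns with $n$ samples, the composite mechanism PAC-learns with $N$ samples, yielding a genuine DP-learner provided $N$ is chosen large enough to make $\delta'$ negligible.

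To upgrade this from ``$k$ sufficiently small yields DP'' to the precise threshold $k = \Omega(1/\alpha)$, I would combine the reduction with the sample compression generalization inequality $\alpha = \tilde{O}(k/n)$, which forces $n = \tilde{\Omega}(k/\alpha)$ for any size-$k$ scheme achieving error $\alpha$. Tying together the subsampling rate, the DP amplification, and the compression generalization constraint then pins down the critical regime at $k \sim 1/\alpha$: if $k = o(1/\alpha)$, all three constraints can be met simultaneously with polynomial $N$, producing a DP-learner and contradicting the failure of (1). The main obstacle will be the coupling argument in the second step: one must ensure that conditioning on ``$i \notin \kappa(T)$'' does not itself destroy the semi-DP property of $\rho$, so that the contribution of this case to $\delta'$ really is $O(k/N)$ rather than something larger. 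Making the coupling tight, and verifying that the resulting sample complexity $N$ remains polynomial in the natural parameters under the generalization constraint, is where the bulk of the technical work will lie.
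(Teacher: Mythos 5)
The subsampling route has a gap that I do not think can be patched. The $\delta'$ you derive carries an additive term of order $k/N$: when the differing coordinate of $\hat{S}$ happens to land in the compressed set $\kappa(T)$, the scheme offers no stability guarantee for it at all, so all of that event's probability mass must be charged to $\delta'$. But a genuine DP learner needs $\delta' = o(1/N)$, and the threshold lower bound you would subsequently invoke (the analogue of \Cref{thm:thresholds-lb}) requires $\delta' \lesssim 1/(N^2 \log N)$. Since $k/N \geq 1/N$ whenever $k \geq 1$, the bound is vacuous the moment $k$ is positive, and taking $N$ larger does not help: $k/N$ shrinks at exactly the same rate as the benchmark $1/N$ it must beat. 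The coupling step you flag as ``where the bulk of the technical work will lie'' is not merely delicate---it is blocked, because a selected point enjoys no privacy and subsampling alone cannot manufacture any.

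The paper's proof takes a different, more surgical route (\Cref{lem:reduction}). Rather than subsampling real points from a larger dataset, it dilutes the input distribution with a dummy point mass $\Ind_{(x_m,1)}$, so each coordinate is replaced by a fresh, data-independent dummy with probability $1-p$. Crucially, whenever \emph{any} non-dummy lands in the compressed set, the constructed learner outputs a fixed constant hypothesis. This converts the ``sensitive point selected'' event from a privacy leak into a pure utility loss: the composite mechanism is shown to be $(2\eps, 3\delta)$-DP cleanly (\Cref{claim:reduction-is-private}), with no $k/N$ contamination of $\delta$, while the utility loss is controlled by Markov together with the bounded-boosting bound of \Cref{thm:3}, which says the DP selection mechanism can raise the fraction of non-dummies to at most roughly $pe^\eps$. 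Your instinct to bring in $\alpha \approx k/n$ to locate the threshold is also replaced by a different mechanism: the mixture weight is tuned to $p \approx 1/(ke^\eps)$, and the error of the constructed learner with respect to the true distribution becomes $\alpha/p \approx ke^\eps\alpha$, which must be below the constant error threshold in the private-thresholds lower bound for a contradiction to arise; this directly pins $k = \Omega(1/\alpha)$. So while your plan correctly identifies the key challenge (neutralizing the influence of the $k$ adaptively selected points), the dummy-dilution-plus-constant-fallback device, not subsampling amplification, is the idea that makes it go through.
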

\vspace{-2mm}
\end{framed}
\Cref{thm:2} implies a stark dichotomy: either a class $\mathcal{C}$ can be learned by a DP algorithm (equivalently, a sample DP-compression of size $k=0$), or it is impossible to learn it unless $k=\Omega(1/\alpha)$. Notice that with $k=O(d/\alpha)$, public examples are sufficient to learn without any private examples.
\Cref{thm:2} generalizes a result by~\cite[Theorem 4.2]{alon2019limits} who proved it in the special case of semi-DP learning. In our setting though, we need to crucially handle scenarios where the credited (or rather, public) datapoints are chosen \textit{adaptively} as a function of the full dataset. This is not the case in semi-DP learning, and requires us to use novel technical tools (like \Cref{thm:3} ahead).

Thus, in the PAC setting, sample DP-compression schemes do not offer any advantage over semi-DP learners. However, \Cref{ex:randresp} demonstrates that using sample DP-compression, it is possible to select the \( k \) points in the compression set so that the frequency of `useful' examples among these \( k \) points is boosted.

Our next theorem addresses the limits of handpicking \( k \) points by sample DP-compression. We formalize this task as follows: given a distribution \( D \) over \(\mathcal{Z}\) and an event \( E \) of `good' points, the goal is to design a DP-compression function \( \kappa: \mathcal{Z}^n \to [n]^k \) that maximizes the number of selected data points that belong to \( E \). That is, the goal is to maximize
\[ \sum_{i \in \kappa(S)} 1[z_i \in E]. \]
\Cref{ex:randresp} illustrates a method that selects roughly \(\exp(\eps) \cdot k \cdot D(E)\) points from \( E \) by an \((\eps\geq 0, \delta=0)\)-compression function. This is a factor of \(\exp(\eps)\) better than obliviously selecting the \( k \) points, which yields \( k \cdot D(E)\) points from \( E \). Is this factor of \(\exp(\eps)\) optimal? Can one do better, possibly by increasing \(\delta\)? The following result shows that \(\exp(\eps)\) is asymptotically optimal.
\begin{framed}
\vspace{-2mm}
\begin{theorem}\label{thm:3}
    Let \(M\) be an \((\eps, \delta)\) sample DP-compression scheme, let \(\mathcal{D}\) be a distribution over \(\mathcal{Z}\), and let \(E \subseteq \mathcal{Z}\) be any event, with \(p = \mathcal{D}(E)\). For an input sample \(S=(z_1,\dots,z_n) \sim \mcD^n\), define \(Z = Z(S)\) as the random variable denoting the fraction of selected indices in \(\kappa(S)\) whose corresponding data points belong to \(E\). That is, \(Z = \frac{1}{|\kappa(S)|} \sum_{i \in \kappa(S)} 1[z_i \in E]\).
    Then,
    \begin{align}
        \label{eqn:dp-selection-mechanism-loose-bounds}
        pe^{-\eps}-\delta n \le \mathbb{E}[Z] \le pe^\eps+\delta n.
    \end{align}      
\end{theorem}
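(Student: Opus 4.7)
The plan is to reduce the claim to a per-coordinate analysis exploiting that $\kappa$ is $(\eps,\delta)$-DP. Since $|\kappa(S)|=k$ with probability one, linearity of expectation gives
\[\mathbb{E}[Z]=\frac{1}{k}\sum_{i=1}^n \Pr_{S,\kappa}\bigl[\,i\in\kappa(S)\text{ and } z_i\in E\,\bigr].\]
Each summand will be sandwiched between $p e^{\pm \eps}\cdot\Pr[i\in\kappa(S)]$ up to an additive error of order $\delta$. Summing over $i$ and using the identity $\sum_i\Pr[i\in\kappa(S)]=\mathbb{E}[|\kappa(S)|]=k$ will then yield the claim.

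For the per-index step, fix $i\in[n]$ and condition on the remaining $n-1$ coordinates of $S$. Define
\[f(z):=\Pr_\kappa\bigl[\,i\in\kappa(\tilde S)\,\bigr],\]
where $\tilde S$ is the sample obtained by placing $z$ at position $i$ among the conditioned coordinates. For any $z,z'\in\mcZ$, the inputs obtained from $z$ and $z'$ in this way are neighbors, so the $(\eps,\delta)$-DP guarantee of $\kappa$, applied to the event ``the output contains the index $i$,'' gives $f(z)\le e^\eps f(z')+\delta$. Multiplying by $\mathbb{1}[z\in E]$ and integrating $z\sim\mcD$ produces $\int_E f\,d\mcD\le e^\eps p\,f(z')+\delta p$; averaging the right-hand side over $z'\sim\mcD$ yields
\[\mathbb{E}_{z\sim\mcD}\bigl[f(z)\mathbb{1}[z\in E]\bigr]\;\le\;e^\eps p\,\mathbb{E}_{z\sim\mcD}[f(z)]+\delta p,\]
and the symmetric rearrangement $f(z)\ge e^{-\eps}(f(z')-\delta)$ gives the matching lower bound $\mathbb{E}_z[f(z)\mathbb{1}[z\in E]]\ge e^{-\eps}p\,\mathbb{E}_z[f(z)]-e^{-\eps}\delta p$.

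Taking the outer expectation over the conditioned coordinates turns the left-hand side into $\Pr[\,i\in\kappa(S),\,z_i\in E\,]$ and the factor $\mathbb{E}_z[f(z)]$ into $\Pr[\,i\in\kappa(S)\,]$. Summing over $i\in[n]$, dividing by $k$, and using $\sum_i\Pr[i\in\kappa(S)]=k$, the upper bound becomes $\mathbb{E}[Z]\le pe^\eps+\delta p\cdot n/k\le pe^\eps+\delta n$ (since $p\le 1$ and $k\ge 1$), and symmetrically $\mathbb{E}[Z]\ge pe^{-\eps}-\delta n$. I expect the main obstacle to be conceptual rather than computational: the event $\{i\in\kappa(S),\,z_i\in E\}$ entangles the randomness of $z_i\sim\mcD$ with the internal coins of $\kappa$, whereas DP only controls $\kappa$'s output distribution on fixed inputs. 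The right move is to condition on the other $n-1$ coordinates so that $\Pr_\kappa[i\in\kappa(\tilde S)]$ becomes a deterministic function $f$ of $z_i$, at which point the DP inequality constrains $f$ pointwise and the remaining steps are a clean averaging argument.
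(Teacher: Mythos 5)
Your proof is correct, and it takes a genuinely different route from the paper. The paper first proves a lemma (Lemma~\ref{lem:dp-boosting-empirical-measure}) about DP selection mechanisms on product-Bernoulli bit strings, arguing via the decomposition $\mathbb{E}[Z] = \frac{p}{k}\sum_j\sum_i \Pr[I_j = i \mid X_i = 1]$ and the identity $\sum_i \Pr[I_j = i] = 1$ for each tuple position $j$; it then obtains Theorem~\ref{thm:3} as a corollary by constructing an auxiliary algorithm $\mathcal{B}$ that takes a bit string $Y$, samples $z_i \sim \mathcal{D}\mid E$ or $\mathcal{D}\mid \neg E$ according to $Y_i$, and applies $\kappa$, verifying that $\mathcal{B}$ is itself $(\eps,\delta)$-DP. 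You instead work directly with the sample $S \sim \mathcal{D}^n$: for each index $i$ you condition on $S_{-i}$, observe that $f(z) = \Pr_\kappa[i \in \kappa(\tilde S)]$ is subject to the pointwise DP constraint $f(z) \le e^\eps f(z') + \delta$, and integrate against $\mathcal{D}$. This avoids the reduction to bit strings entirely and is arguably cleaner. The underlying DP insight is the same---relate the probability of selecting index $i$ conditioned on $z_i \in E$ to the unconditional probability---but your bookkeeping sums over indices $i$ with the identity $\sum_i \Pr[i\in\kappa(S)] = k$, whereas the paper's sums over tuple positions $j$ with $\sum_i \Pr[I_j = i] = 1$. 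One small caveat: your identity $\sum_i \Pr[i\in\kappa(S)] = k$ tacitly assumes $\kappa(S)$ always consists of $k$ distinct indices, which the paper's formulation $\kappa : \mathcal{Z}^n \to [n]^k$ does not literally guarantee; if repeats are permitted, you should either sum over tuple positions as the paper does (your per-index argument adapts verbatim, fixing $j$ and summing over $i$ with $\sum_i \Pr[I_j=i]=1$) or note that one may assume without loss of generality that $\kappa$ outputs distinct indices. Finally, the paper's Lemma~\ref{lem:dp-boosting-empirical-measure} yields the slightly tighter bounds $\frac{p-np(1-p)\delta}{p+(1-p)e^\eps} \le \mathbb{E}[Z] \le \frac{pe^\eps + np(1-p)\delta}{1-p+pe^\eps}$; your argument gives $pe^{-\eps} - e^{-\eps}\delta p n / k \le \mathbb{E}[Z] \le pe^\eps + \delta p n / k$, which suffices for the theorem but is not identical, since you sandwich $\Pr[i\in\kappa(S) \wedge z_i\in E]$ between $e^{\pm\eps}p\Pr[i\in\kappa(S)]$ rather than solving for the conditional probability as the paper does.
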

\vspace{-2mm}
\end{framed}
Indeed, since by convention $\delta = \delta(n) \ll 1/n$, the above theorem implies that \(\exp(\eps)\) is asymptotically optimal. We note that \Cref{thm:3} is also key in the proof of \Cref{thm:2}. We elaborate on this in \Cref{sec:lower-bound}.

\paragraph{Generalization.} \Cref{def:1} and \Cref{def:3} can also be examined from a learning theoretic perspective as notions of algorithmic stability. Algorithmic stability is particularly useful in the context of generalization because, roughly speaking, stable algorithms typically generalize well. We note in passing that this is indeed the case for \Cref{def:1} and \Cref{def:3}: any learning rule adhering to either definition satisfies that its empirical error and population error are typically close. One natural way to prove this is by following the argument that shows sample compression schemes generalize. In a nutshell, the argument proceeds as follows: first, if we fix the selected \( k \)-tuple, the obtained hypothesis generalizes well. Then, we apply a union bound over all possible \( n^k \) choices of \( k \)-tuples from the input sample.

\section{Technical Background and Proofs}
\label{sec:technical-background}

We study our main definitions in the context of PAC learning. Concretely, we assume that the input data domain $\mcZ$ in \Cref{sec:defs} is $\mcX \times \mcY$, for an input space $\mcX$ and label space $\mcY$. For our purposes, $\mcY = \{0,1\}$. Learning rules are mechanisms $\mcA:\mcZ^* \to \mcC \times \mcZ^*$, where $\mcC$ is the set of all functions mapping $\mcX$ to $\mcY$, denoted as $\mcY^\mcX$. We say that a distribution $\mcD$ over $\mcZ$ is \textit{realizable} by a hypothesis class $\mcH \subseteq \mcY^\mcX$ if for every finite sequence $(x_1,y_1),\dots,(x_n,y_n)$ drawn i.i.d from $\mcD$, there exists some hypothesis $h \in \mcH$ that satisfies $h(x_i)=y_i,\; \forall i \in [n]$. For any hypothesis $h \in \mcY^\mcX$, we denote its risk with respect to a distribution $\mcD$ by $R_{\mcD}(h)=\Pr_{(x,y)\sim \mcD}[h(x) \neq y]$.
\begin{definition}[CCA PAC learning rule]
    \label{def:ica-learning-rule}
    A mechanism $\mcA$ is a CCA PAC learning rule for a hypothesis class $\mcH$, if $\mcA$ satisfies \Cref{def:1}, and for any distribution $\mcD$ realizable by $\mcH$, for any $\alpha, \beta > 0$, there exists a finite $n=n_{\mcA}(\alpha, \beta)$, such that with probability at least $1-\beta$ over a sample $S \sim \mcD^n$ and the randomness of $\mcA$, the hypothesis $h$ in the output $(h, S')$ of $\mcA$ on $S$ satisfies $R_\mcD(h) \le \alpha$.
\end{definition}

\begin{definition}[Sample DP-Compression learning rule]
    \label{def:dp-sample-compression-learning-rule}
    An $(\eps, \delta)$ sample differentially private $(n \to k)$ compression scheme $M$ learns a hypothesis class $\mcH \subseteq \mcY^\mcX$, if for any distribution $\mcD$ realizable by $\mcH$, for any $\alpha, \beta > 0$, with probability at least $1-\beta$ over a sample $S \sim \mcD^n$ and the randomness of $M$, the hypothesis $M(S)$ output by the reconstruction function in $M$ satisfies $R_\mcD(M(S)) \le \alpha$.
\end{definition}

\begin{remark}
    Note that if $k=0$ above, we recover the standard definition of an $(\alpha, \beta, \eps, \delta)$-DP PAC learner (where $\alpha$ is the error, $\beta$ is the failure probability, and $\eps,\delta$ are the privacy parameters) \cite{kasiviswanathan2011can}.
\end{remark}

\subsection{Upper Bound: PAC learnability implies \((\eps=\delta=0)\)-counterfactual  credit attribution learning}
\label{sec:upper-bound}
Our CCA learning rule crucially uses the notion of a \textit{randomized} stable sample compression scheme, which is a generalization of stable sample compression schemes (\Cref{example:stable-sample-compression}) and was developed in a recent work by \cite{da2024boosting}. We use the notation $S' \sqsubseteq S$ for sequences $S, S' \in (\mcX \times \mcY)^*$ that satisfy: $(\forall (x,y)): (x,y)\in S'\implies  (x,y) \in S$.
\begin{definition}[Stable Randomized Sample Compression Scheme]
    \label{def:stable-randomized-sample-compression} 
    A randomized sample compression scheme $(\mcD_\kappa, \rho)$ for a class $\mcH$ having failure probability $\xi$ comprises of a distribution $\mcD_{\kappa}$ over (deterministic) compression functions $\kappa: (\mcX \times \mcY)^* \to (\mcX \times \mcY)^*$ and a deterministic reconstruction function\footnote{It seems interesting to possibly consider randomized reconstruction functions as well; for our purposes, deterministic reconstruction functions suffice.} $\rho:(\mcX \times \mcY)^* \to \mcY^\mcX$. The compression functions $\kappa$ in the support of $\mcD_{\kappa}$ must satisfy
    \begin{itemize}
        \item For any $S \in (\mcX \times \mcY)^*$ realizable by $\mcH$, if $\kappa(S)=S'$, then $S' \sqsubseteq S$.
    \end{itemize}
    The reconstruction function $\rho$ must satisfy
    \begin{itemize}
        \item For any $S \in (\mcX \times \mcY)^*$ realizable by $\mcH$,
        \begin{align}
            \Pr_{\kappa \sim \mcD_{\kappa}}\left[\exists(x,y) \in S: \rho(\kappa(S))(x) \neq y\right] \le \xi.
        \end{align}
    \end{itemize}
    A randomized sample compression scheme $(\mcD_\kappa, \rho)$ for $\mcH$ is stable if for any $S \in (\mcX \times \mcY)^*$ realizable by $\mcH$ and $S' \sqsubseteq S$, the distribution of $\kappa(S')$ is the same as the distribution of $\kappa(S)$ conditioned on $\kappa(S) \sqsubseteq S'$. The size $s(n)$ of the compression scheme is the supremum over $S \in (\mcX \times \mcY)^n$ (realizable by $\mcH$) and $\kappa$ in the supportt of $\mcD_{\kappa}$ of the number of distinct elements in $\kappa(S)$.
\end{definition}

\cite{da2024boosting} show that stable randomized compression schemes imply generalization.
\begin{lemma}[Theorem 1.2 in \cite{da2024boosting}]
    \label{lem:scrs-implies-generalization}
    Let $(\mcD_\kappa, \rho)$ be a stable randomized compression scheme for $\mcH$ of size $s(n)$ %
    and failure probability $\xi$. Let $\mcD$ be any distribution over $\mcX \times \mcY$ realizable by $\mcH$. For any $n$ and $\beta > 2\xi$, with probability at least $1-\beta$ over $S \sim \mcD^n$ and $\kappa \sim \mcD_\kappa$, it holds that
    \begin{align*}
        R_{\mcD}(\rho(\kappa(S))) \le O\left(\frac{s(n)+\log(1/\beta)}{n}\right).
    \end{align*}
\end{lemma}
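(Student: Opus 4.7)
The plan is to follow the classical Littlestone-Warmuth sample-compression generalization argument, adapted to exploit the stability property in order to handle the randomized choice of compression set. Set $\alpha = C(s(n) + \log(1/\beta))/n$ for a sufficiently large constant $C$, write $h = \rho(\kappa(S))$, and let $F$ denote the failure event that $h$ is inconsistent with some $(x,y) \in S$. By definition of a randomized compression scheme with failure probability $\xi$, we have $\Pr[F] \le \xi < \beta/2$, so it suffices to bound $\Pr[R_\mcD(h) > \alpha,\, F^c] \le \beta/2$. On $F^c$, for any subset $I \subseteq [n]$ whose positions contain those of the compression set, the hypothesis $h$ is consistent in particular with the subsample $S_I^c$ indexed by $[n]\setminus I$.

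The next step is to use stability to decouple $h$ from most of the sample. Fix any $I \subseteq [n]$ of size at most $s(n)$ and let $E_I$ be the event $\{\kappa(S) \sqsubseteq S_I\}$, where $S_I$ denotes the subsample indexed by $I$. By the stability requirement in \Cref{def:stable-randomized-sample-compression}, conditional on $E_I$ the random variable $\rho(\kappa(S))$ has the same distribution as $\rho(\kappa(S_I))$. The crucial consequence is that this latter hypothesis depends only on $S_I$ and on the independent randomness of $\kappa$, and is therefore statistically independent of $S_I^c$, which consists of $n - |I| \ge n - s(n)$ fresh i.i.d.\ draws from $\mcD$.

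From here the finish is standard. Conditional on any realization of $\rho(\kappa(S_I))$ whose population error exceeds $\alpha$, the probability that it is consistent with all of $S_I^c$ is at most $(1-\alpha)^{n-s(n)} \le e^{-\alpha(n-s(n))}$. A union bound over all $I$ of size at most $s(n)$ gives a bound of the form $\sum_{j=0}^{s(n)} \binom{n}{j}\, e^{-\alpha(n-s(n))}$; choosing the constant $C$ in $\alpha$ large enough makes this at most $\beta/2$, and combining with the failure probability yields the claim.

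The main obstacle is that the naive union bound above loses an extra $\log(n/s(n))$ factor coming from the $\binom{n}{s(n)}$ term, so strictly speaking it yields $O((s(n)\log(n/s(n)) + \log(1/\beta))/n)$ rather than the advertised $O((s(n) + \log(1/\beta))/n)$. To remove this overhead one must use stability more sharply: rather than treating each $I$ independently, exploit the fact that the marginal probabilities $\Pr[E_I]$ are tied together by a normalization (the compression set is a single random object), so the per-$I$ failure events can be aggregated by an averaging argument over the actual distribution of $\kappa(S)$ instead of a crude worst-case union bound. This is the technically delicate refinement where my proof sketch departs from the classical analysis, and I would follow \cite{da2024boosting} to carry out this step in detail.
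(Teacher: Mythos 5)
This lemma is cited verbatim from \cite{da2024boosting} (their Theorem~1.2); the present paper does not reprove it, so there is no in-paper proof to compare against.

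Your sketch correctly sets up the Littlestone--Warmuth decomposition and, to your credit, you explicitly flag the gap: a union bound over $\binom{n}{\le s(n)}$ index sets yields $O\bigl(\frac{s(n)\log(n/s(n)) + \log(1/\beta)}{n}\bigr)$, not the stated $O\bigl(\frac{s(n) + \log(1/\beta)}{n}\bigr)$. However, the repair you gesture at---``aggregate the per-$I$ failure events by averaging over the actual distribution of $\kappa(S)$''---does not close it as stated. Writing $\Pr[G] = \sum_I \mathbb{E}_S\bigl[\Pr_\kappa[E_I \mid S]\cdot q_I(S)\bigr]$, where $q_I(S)$ is the conditional probability (given $E_I$, via stability) that $\rho(\kappa(S_I))$ has risk $>\alpha$ yet is consistent with $S$, the two factors in each summand are correlated functions of the same $S$. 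So the normalization $\sum_I \Pr_\kappa[E_I\mid S]\le 1$ cannot simply be multiplied against $\mathbb{E}_S[q_I(S)]\le(1-\alpha)^{n-s(n)}$; the expectation does not factor, and one is back to the crude union bound.

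The device that actually removes the logarithm (used for deterministic stable compression in \cite{bousquet2020proper} and adapted to the randomized setting in \cite{da2024boosting}) is a change of measure via a single random sub-sample rather than a union bound over all small index sets. Draw $J\subseteq[n]$ with i.i.d.\ Bernoulli$(1/2)$ coordinates, independently of $(S,\kappa)$. Since $\kappa(S)$ has at most $s(n)$ distinct source elements, $\Pr_J[\kappa(S)\sqsubseteq S_J\mid S,\kappa]\ge 2^{-s(n)}$ pointwise, whence $\Pr[G]\le 2^{s(n)}\,\Pr\bigl[G\wedge\{\kappa(S)\sqsubseteq S_J\}\bigr]$. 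On the event $\{\kappa(S)\sqsubseteq S_J\}$, stability (applied with $S'=S_J$ at fixed $S,J$) replaces $\rho(\kappa(S))$ by $\rho(\kappa(S_J))$, which is measurable with respect to $(S_J,\kappa)$ and hence independent of the fresh sample $S_{J^c}$; combined with consistency on $S_{J^c}$ from $F^c$, this gives
\[
\Pr\bigl[G\wedge\{\kappa(S)\sqsubseteq S_J\}\bigr]\le \mathbb{E}_J\bigl[(1-\alpha)^{|J^c|}\bigr] = \bigl(1-\tfrac{\alpha}{2}\bigr)^n \le e^{-\alpha n/2}.
\]
Taking $\alpha = \Theta\bigl(\frac{s(n)+\log(1/\beta)}{n}\bigr)$ makes $2^{s(n)}e^{-\alpha n/2}\le\beta/2$, and adding the compression failure probability $\xi<\beta/2$ finishes the argument with no spurious logarithm. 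So your outline is sound up to the point you flagged, but the missing ingredient is this single-random-subset change of measure, not a rebalancing of the union bound.
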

Furthermore, they also show that there exists a stable randomized compression scheme for any hypothesis class $\mcH$ having finite VC dimension $d$. This compression scheme is based on a simple variant of AdaBoost (Algorithm 1 in \cite{da2024boosting}). The following is contained in their work:\footnote{In more detail, this follows by setting the weak learning parameter $\gamma$ to a constant (e.g., $1/8$) in Algorithm 1 in \cite{da2024boosting}, and noting that such a weak learner can be found via empirical risk minimization.}

\begin{lemma}[\cite{da2024boosting}]
    \label{lemma:adaboost-is-srcs}
    For any hypothesis class $\mcH$ with VC dimension $d$, there exists a stable randomized sample compression scheme (based on AdaBoost) having failure probability $\xi$ of size
    \begin{align}
        s(n) = O\left(d\log(n/\xi)\right).
    \end{align}
\end{lemma}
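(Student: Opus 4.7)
The plan is to realize the stable randomized compression scheme via the AdaBoost-based algorithm of \cite{da2024boosting} (Algorithm 1 therein) with a concrete weak learner tailored to the VC class $\mcH$. I will first specify the weak learner, then count rounds and samples to obtain the claimed size, and finally verify the stability property by a coupling argument.

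First, I fix the weak-learning edge to the constant $\gamma = 1/8$ and implement the weak learner for any weighted distribution $D$ over a realizable sample by drawing a multiset $T_D$ of $m = O(d + \log(1/\delta))$ points i.i.d.\ from $D$ and returning any $\hat h \in \mcH$ consistent with $T_D$. Standard VC theory in the realizable case guarantees that, with probability at least $1-\delta$ over $T_D$, such an $\hat h$ has weighted error at most $1/2 - \gamma$ on $D$. I then run AdaBoost for $T = O(\log n)$ rounds, which is the standard round count needed to drive the empirical error to zero once the weak-learning edge is a constant. Setting the per-round weak-learner failure parameter to $\delta = \xi/T$ and applying a union bound yields overall failure probability at most $\xi$.

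Next, I define the randomized compression function $\kappa$ to be parameterized by the internal randomness used by the $T$ subsamplings; conditional on that randomness, $\kappa(S)$ outputs the union of the $T$ drawn subsamples. The reconstruction $\rho$ replays the same boosting pipeline on the recorded subsamples to recover the weighted majority, in the style of Moran--Yehudayoff boosting-based compression. The total size is $s(n) = O\bigl(T(d + \log(T/\xi))\bigr) = O(d\log(n/\xi))$, absorbing lower-order $\log\log$ contributions as in \cite{da2024boosting}.

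The main substantive step, and the one I expect to be the chief obstacle, is verifying stability in the sense of \Cref{def:stable-randomized-sample-compression}. The strategy is a coupling: run the boosting algorithm on $S$ and on $S' \sqsubseteq S$ using a shared source of randomness and argue by induction on the round $t$ that, conditional on all samples drawn so far on the $S$-side lying within $S'$, the two executions agree. The inductive step leverages the multiplicative-weights form of the AdaBoost update: if the weak hypotheses coincide in both executions through round $t-1$, then the distribution $D_t^{S}$ restricted to $S'$ and renormalized equals $D_t^{S'}$, so sampling from $D_t^{S}$ conditioned on the sample lying in $S'$ produces the same law as sampling directly from $D_t^{S'}$. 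Taking the product over rounds yields that $\kappa(S)$ conditioned on $\kappa(S)\sqsubseteq S'$ has the same distribution as $\kappa(S')$, which is exactly the required stability condition. The delicacy lies in maintaining this agreement under the adaptive weight updates, which is why the weak learner must be a function only of its sampled multiset (so that the coupled executions genuinely produce identical weak hypotheses on the conditioning event) and why the examples must be selected by i.i.d.\ sampling from the AdaBoost distribution rather than by any data-dependent deterministic rule.
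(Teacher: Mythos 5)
Your high-level reconstruction matches what the paper's footnote attributes to [da2024boosting] (AdaBoost with a constant weak-learning edge $\gamma=1/8$, weak learner realized by ERM on a fresh subsample drawn each round, compression set equal to the union of the round-wise subsamples, $T=O(\log n)$ rounds). However, the stability verification --- which is the only non-routine part of the lemma and is exactly what [da2024boosting] is being cited for --- contains a genuine gap. The step ``taking the product over rounds'' does not yield the claimed conditional identity. Conditioning on the \emph{joint} event $\kappa(S)\sqsubseteq S'$ is not the same as conditioning each round's subsample on landing inside $S'$ given only the past: the probability that the round-$t$ subsample lies in $S'$ is $D^{S}_t(S')^m$, and $D^{S}_t$ depends, through the adaptive reweighting, on the earlier weak hypotheses and hence on the earlier subsamples $s_{<t}$. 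A direct chain-rule computation gives
\[
\Pr\bigl[\kappa(S)=s \,\big|\, \kappa(S)\sqsubseteq S'\bigr]
=\frac{\Bigl(\prod_{t=1}^{T} D^{S}_t(S';\, s_{<t})^{\,m}\Bigr)\,\Pr\bigl[\kappa(S')=s\bigr]}
{\sum_{\tilde s\sqsubseteq S'}\Bigl(\prod_{t=1}^{T} D^{S}_t(S';\, \tilde s_{<t})^{\,m}\Bigr)\,\Pr\bigl[\kappa(S')=\tilde s\bigr]},
\]
and the tilting factor $\prod_t D^{S}_t(S';\,s_{<t})^m$ is \emph{not} constant over $s$, so the conditional law of $\kappa(S)$ is a genuinely reweighted version of the law of $\kappa(S')$. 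This is not a formal technicality: already with $T=2$, $m=1$, $S=\{a,b,c\}$, $S'=\{a,b\}$, if the weak hypothesis produced from the singleton $\{a\}$ is correct on $\{a,b\}$ while the weak hypothesis produced from $\{b\}$ is correct only on $\{b\}$, then $D^{S}_2(S')$ is $2/(2+e^{\eta})$ in the first case and $(1+e^{\eta})/(1+2e^{\eta})$ in the second, which differ for $\eta\neq 0$, so conditioning on the joint selection event biases $s_1$ toward $a$ in the $S$-run in a way that does not occur in the $S'$-run. Thus the naive ``sample i.i.d.\ from the current AdaBoost weights'' scheme does not satisfy the stability condition in \Cref{def:stable-randomized-sample-compression}, and the coupling sketch as written does not repair this; the construction in [da2024boosting] must handle the interaction between adaptive reweighting and the selection event in a more delicate way.

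A secondary, smaller issue is the size accounting: $T\cdot m = O\bigl(\log n\cdot(d+\log(T/\xi))\bigr) = O\bigl(d\log n + \log n\log\log n + \log n\log(1/\xi)\bigr)$ is not $O\bigl(d\log(n/\xi)\bigr)=O\bigl(d\log n + d\log(1/\xi)\bigr)$ in general (take $d=1$); the $\log n\cdot\log(1/\xi)$ and $\log n\log\log n$ terms are not ``lower order'' unless $d\gtrsim\log n$. Since the paper itself only cites [da2024boosting] with a one-line footnote and supplies no proof, these two points --- stability and the exact size bound --- are precisely where a self-contained argument has to do real work, and both are where your proposal currently falls short.
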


We are now equipped with the necessary tools required to prove \Cref{thm:1}.

\begin{proof}[Proof of \Cref{thm:1}]
    Let $\mcD$ be any distribution realizable by $\mcH$, and let $S$ be a sample of size $n$ drawn from $\mcD^n$. Given $\beta$, fix $\xi=\beta/3$. From \Cref{lemma:adaboost-is-srcs}, we know that there exists a stable randomized compression scheme $(\mcD_\kappa, \rho)$ for $\mcH$ of size $s(n)=O(d\log(n/\beta))$, and failure probability $\xi$.
    Then, since $\beta > 2\xi$, from \Cref{lem:scrs-implies-generalization}, we know that with probability at least $1-\beta$ over $S$ and $\kappa \sim \mcD_\kappa$,
    \begin{align*}
        R_{\mcD}(\rho(\kappa(S))) \le O\left(\frac{d\log(n/\beta)}{n}\right).
    \end{align*}
    For the right-hand size above to be at most $\alpha$, it suffices to have $n =O\left(\frac{d\log(d/\alpha)+d\log(1/\beta)}{\alpha}\right)$. %

    Let $\mcA$ be the learning rule, which when given a sample $S \sim \mcD^n$ as input, runs the stable randomized compression scheme from above on $S$ to obtain $S'$ of size $k=O(d\log(n/\beta))$. The learner then outputs $(\rho(S'), S')$. By the reasoning above, $\rho(S')$ has error at most $\alpha$ with probability at least $1-\beta$.

    It remains to argue that $\mcA$ is a valid CCA mechanism.
    This follows by virtue of $(\mcD_\kappa, \rho)$ being a \textit{stable} randomized compression scheme. Namely, for any $i$, $S_{-i} \sqsubseteq S$, and hence by \Cref{def:stable-randomized-sample-compression}, the distribution of $\kappa(S_{-i})$ is identical to the distribution of $\kappa(S)$ conditioned on $S_i \notin \kappa(S)$. Finally, since $\rho$ is a deterministic function of its argument, $\mcA$ satisfies \Cref{def:1} with $\eps=\delta=0$.
\end{proof}

\subsection{Lower Bound: A dichotomy for sample DP-compression}
\label{sec:lower-bound}
Towards proving \Cref{thm:2}, we first show that a sample DP-compression scheme for the class of \textit{thresholds} can be used to construct a DP learner for it. This lemma has a similar flavor to the public data reduction lemma (Lemma 4.4) in \cite{alon2019limits}. For a set $S=\{x_1,\dots,x_m\}$, the class of thresholds over $S$ comprises of $m$ functions $h_1,\dots,h_m$ such that $h_i(x_j) = \Ind[i \le j],\; \forall i,j \in [m]$.
\begin{lemma}[Reduction from DP learner to sample DP-compression scheme]
    \label{lem:reduction}
    Let $\mcH_m$ be the class of thresholds over $\{x_1,\dots,x_m\}$. Suppose there exists an $\left(\eps, \delta\right)$ sample DP-compression scheme $\tilde{\mcA}$ that learns $\mcH_m$ with error $\alpha$ and failure probability $\beta=\frac{1}{32}$, and has sample complexity $n$ and compression size $k \le n$. Let $\delta \le \frac{1}{64n^2}$. Then, there exists a $\left(64ke^\eps\alpha, \frac{1}{16}, 2\eps, 3\delta\right)$-DP learner $\mcA$ for $\mcH_{m-1}$, where $\mcH_{m-1}$ is the class of thresholds over $\{x_1,\dots,x_{m-1}\}$, with sample complexity $n$.
\end{lemma}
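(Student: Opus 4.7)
The plan is to build the DP learner $\mcA$ by privately padding the input with the labeled point $(x_m, 1)$ and then invoking $\tilde{\mcA}$, accepting its output only when the compression set turns out to consist entirely of padding points. Concretely, on input $T = (t_1, \ldots, t_n)$ drawn from a distribution $\mcD$ realized by $h_{i^*} \in \mcH_{m-1}$, for each $i$ independently, with probability $p := 1 - e^{-\eps}/(64k)$ replace $t_i$ by $(x_m, 1)$ and otherwise keep $t_i$; call the result $\tilde T$. Run $\tilde{\mcA}(\tilde T)$ to obtain compression indices $R = \kappa(\tilde T)$ and hypothesis $c = \rho(\tilde T|_R, \tilde T|_{\neg R})$. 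If $\tilde T|_R$ consists of $k$ copies of $(x_m, 1)$ (the ``success'' event), output $c$; otherwise output a fixed default hypothesis. The intuition is that $(x_m, 1)$ is consistent with every target in $\mcH_{m-1}$, so tilting toward $(x_m, 1)$ preserves realizability, while simultaneously making the compression set dominated by ``inert'' padding entries.

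For privacy, I would couple the padding coins across neighboring inputs $T, T'$ that differ at some index $j$. With probability $p$ the $j$-th coin overrides both coordinates, so $\tilde T = \tilde T'$ and the output distributions coincide exactly. With probability $1-p$, $\tilde T_j = t_j$ and $\tilde T'_j = t'_j$, both of which lie in $\mcH_{m-1}$'s domain and hence differ from $(x_m, 1)$. On the success event this forces $j \notin R$; consequently $\rho$ is called on the same deterministic public input $(x_m, 1)^k$ on both sides, with private inputs differing only at position $j$. Sequential composition of the $(\eps, \delta)$-DP of $\kappa$ with the $(\eps, \delta)$ semi-DP of $\rho$ in its private argument yields a $(2\eps, 2\delta)$-DP joint output, and post-processing (read off the hypothesis, or substitute the default on non-success) preserves this. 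Averaging over the padding coin keeps us within the claimed $(2\eps, 3\delta)$-DP.

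For accuracy, $\tilde T$ is i.i.d.\ from $\mcD' := (1-p)\mcD + p\,\delta_{(x_m, 1)}$, which is realized by the same target $h_{i^*}$ (since $h_{i^*}(x_m) = 1$). By the learning guarantee of $\tilde{\mcA}$, with probability $\ge 31/32$ we have $R_{\mcD'}(c) \le \alpha$, which translates to $R_\mcD(c) \le \alpha/(1-p) = 64k e^\eps \alpha$. The crux is lower bounding the success probability, and here the key step is to apply \Cref{thm:3} to the \emph{complementary} event $E^c = \{z : z \ne (x_m, 1)\}$, which has mass $1-p$ under $\mcD'$. Writing $Z$ for the fraction of $R$-indices valued $(x_m, 1)$, \Cref{thm:3} gives $\E[1-Z] \le (1-p) e^\eps + \delta n$. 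Since $Z$ takes values in $\{0, 1/k, \ldots, 1\}$, a reverse-Markov step yields $\Pr[Z < 1] = \Pr[1 - Z \ge 1/k] \le k\,\E[1-Z] \le k((1-p)e^\eps + \delta n)$. With our choice $(1-p)e^\eps = 1/(64k)$, and using $\delta \le 1/(64n^2)$ together with $k \le n$ so that $k \delta n \le 1/64$, this is at most $1/32$. A union bound with the $1/32$ learning failure yields total failure $1/16$.

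The step I expect to be the main obstacle is precisely this success-probability bound. The naive instantiation of \Cref{thm:3} on the event $\{(x_m, 1)\}$ itself gives only the lower bound $\E[Z] \ge p e^{-\eps} - \delta n$, which forces $p \ge e^\eps(1 - 1/(32k))$ and thus becomes infeasible ($p > 1$) once $\eps \gtrsim 1/k$. The insight that \Cref{thm:3} should instead be applied to the complementary event is what unlocks the clean $k e^\eps$ blowup in error that matches the lemma's statement for \emph{all} $\eps$; the rest of the proof is then a routine coupling-and-composition argument.
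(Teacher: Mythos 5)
Your proposal matches the paper's proof essentially step for step: pad the sample with the inert example $(x_m,1)$ at the same bias, run $\tilde{\mcA}$ and keep its hypothesis only when the compression set is all padding, argue privacy by coupling the padding coins and composing $\kappa$'s DP with $\rho$'s semi-DP, and bound the failure probability via \Cref{thm:3} applied to the non-padding event followed by Markov. The ``insight'' you flag about applying \Cref{thm:3} to the complementary event is in fact exactly how the paper applies it (it takes $E$ to be the support of $\mcD$, i.e., the non-dummy points), so there is no divergence in substance.
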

\begin{proof}
    Let $\mcD$ be any distribution over $\{x_1,\dots,x_{m-1}\} \times \{0,1\}$ realizable by $\mcH_{m-1}$. Given a sample $S \sim \mcD^n$, the private learner $\mcA$ does the following. First, it constructs a sample $\tilde{S}$, also of size $n$, as follows. Initialize $j=1$. For each $i =1,2,\dots,n$, toss a coin (independently of the data, and other coins) that lands heads with probability $p$, %
    for $p$ to be appropriately chosen later. If the coin lands heads, $\tilde{S}(i)=S(j)$, and $j$ is incremented by 1. If the coin lands tails, $\tilde{S}(i)$ is set to the designated dummy example $(x_m, 1)$. In this way, $\tilde{S}$ is a sample of size $n$ drawn from the \textit{mixture} distribution $\tilde{D}= p\cdot \mcD + (1-p)\cdot \Ind_{(x_m,1)}$, where $\Ind_{(x_m,1)}$ is a point mass on $(x_m,1)$. Note that since all the thresholds in $\mcH_m$ label $x_m$ as 1, $\tilde{\mcD}$ is realizable by $\mcH_m$.
    
    The learner $\mcA$ now invokes the sample DP-compression scheme $\tilde{\mcA}$ on $\tilde{S}$. If \textit{any} of the $k$ examples in the compression set constructed by $\tilde{A}$ is a non-dummy element, $\mcA$ outputs a constant hypothesis that labels all points in $\{x_1,\dots,x_{m-1}\}$ as 1. On the other hand, if all of the $k$ examples in the compression set are dummies, then $\mcA$ outputs the hypothesis that $\tilde{A}$ outputs (restricted to $\{x_1,\dots,x_{m-1}\}$).

    We first claim that the output of $\mcA$ is $(2\eps, 3\delta)$-private with respect to its input $S$.
    \begin{claim}[$\mcA$ is private]
        \label{claim:reduction-is-private}
        $\mcA$ is $(2\eps, 3\delta)$-DP.
    \end{claim}
    The proof of this claim is given in \Cref{sec:proofs}. At a high level, the privacy parameter deteriorates to $2\eps$ because of the two-step process of compressing $\tilde{S}$ to $k$ points in an $\eps$-DP way, and then obtaining an $\eps$-DP learner thereafter. 
    
    Next, 
    we claim that on average, there will be a lot of dummies in the compression set selected by $\tilde{A}$. This step crucially hinges on \Cref{thm:3}, where we substitute the event $E$ in the statement of the theorem to be the event that a non-dummy element is selected (i.e., $E$ is the support of the distribution $\mcD$). In particular, we get that the expected number of non-dummy elements is at most $kpe^\eps+\delta k n$, which is at most $\frac{1}{32}$, if we set $p=\frac{1}{64k\eps^2}$, and use that $k \le n, \delta \le \frac{1}{64n^2}$. %

     We can now reason about the error and failure probability parameters of $\mcA$. Because $\tilde{A}$ is an $(\eps, \delta)$ sample DP-compression scheme that successfully learns $\mcH_m$ with error $\alpha$ and failure probability $\frac{1}{32}$, %
     with probability at least $1-\frac{1}{32}$ over the draw of $\tilde{S}$ and the randomness of $\tilde{\mcA}$, the hypothesis it outputs has error at most $\alpha$. %
     Furthermore, since the expected number of non-dummy elements chosen in the compression set is at most $\frac{1}{32}$, Markov's inequality gives that with probability at least $1-\frac{1}{32}$ over the draw of $\tilde{S}$ and the randomness of $\tilde{\mcA}$, all the $k$ examples chosen by $\tilde{A}$ in the compression set are dummies. By a union bound, with probability at least $1-\frac{1}{16}$ over the draw of $\tilde{S}$ and the randomness of $\tilde{\mcA}$, all the examples chosen to be in the compression set by $\tilde{A}$  are dummies \textit{and} the hypothesis it outputs has error (with respect to $\tilde{\mcD}$) less than $\alpha$.

    But recall that the distribution $\tilde{\mcD}$ on $\tilde{S}$ is induced by the distribution $\mcD$ on $S$, and that whenever all the examples chosen by $\tilde{A}$ in the compression set are dummies, $\mcA$ returns $\tilde{A}$'s output. This implies that with probability at least $1-\frac{1}{16}$ over the draw of $S$ from $\mcD^n$ and the randomness of $\mcA$, the hypothesis output by $\mcA$ has error at most $\alpha$ with respect to $\tilde{\mcD}$. But since $\tilde{\mcD}$ is a mixture distribution, 
    \begin{align*}
        R_{\tilde{\mcD}}(\mcA(S)) \ge p \cdot R_{\mcD}(\mcA(S)),
    \end{align*}
    and hence we have that with probability at least $1-\frac{1}{16}$, the error of $\mcA(S)$ with respect to $\mcD$ is at most $\frac{\alpha}{p} \le 64ke^\eps\alpha$. Thus, $\mcA$ is a $\left(64ke^\eps\alpha,\frac{1}{16}, 2\eps, 3\delta\right)$-DP learner for $\mcH_{m-1}$ as required.
\end{proof}

We next state a lower bound on the sample complexity of DP learners for thresholds \cite{BunNSV15, alon2019private}.
\begin{theorem}[Theorem 1 in \cite{alon2019private}]
    \label{thm:thresholds-lb}
    Let $\mcH_m$ be the class of thresholds on $\{x_1,\dots,x_m\}$. Let $\mcA$ be a \\ $\left(\frac{1}{16},\frac{1}{16}, 0.1, \frac{1}{1000n^2\log n} \right)$-DP learner for $\mcH_m$ with sample complexity $n$. Then $n \ge \Omega(\log^*m)$. %
\end{theorem}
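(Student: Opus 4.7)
The plan is to prove this $\log^* m$ lower bound via reduction to the \emph{interior point problem} (IPP): given an input multiset $S \subseteq \{x_1,\dots,x_m\}$, privately output an element $y$ lying in the closed interval $[\min(S),\max(S)]$. The two main phases are: (i) reducing DP PAC learning of $\mathcal{H}_m$ to IPP on $[m]$, losing only constant factors in the sample size; and (ii) proving $\Omega(\log^* m)$ sample complexity for DP IPP.

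For (i), given any DP learner $\mathcal{A}$ for $\mathcal{H}_m$ with sample complexity $n$, one constructs a DP IPP algorithm as follows. Given an IPP input $S=(s_1,\dots,s_n)$, draw a random pivot $t$ (independent of $S$) and form a labeled sample by labeling each $s_i$ according to the threshold $h_t$. The resulting labeled sample is realized by a threshold, so $\mathcal{A}$ returns with constant probability a hypothesis whose transition point lies in $[\min(S),\max(S)]$; conditioning on the pivot and a standard accuracy argument delivers an interior point. Privacy of the resulting IPP solver follows by post-processing, since the construction of labels from $S$ is $1$-sensitive per coordinate.

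For (ii), the heart of the argument is a recursive sample-reduction lemma: assuming for contradiction an $(\eps,\delta)$-DP algorithm $A$ solves IPP on $[m]$ with $n$ samples and constant success probability, one builds an $(\eps',\delta')$-DP algorithm $A'$ solving IPP on $[m']$ with $n-1$ samples, where $\log m' = \Omega(m)$. Iterating this step $\log^* m$ times collapses to an algorithm using $O(1)$ samples on a nontrivial domain with $\delta' \ll 1$, contradicting a base-case impossibility. The $A \leadsto A'$ reduction embeds the $n-1$ IPP inputs in $[m]$ together with one ``dummy'' sample drawn from a carefully constructed distribution; a packing argument (the domain $[m']$ encodes a large antichain of possible embeddings) combined with the $(\eps,\delta)$ indistinguishability on the dummy coordinate shows that the dummy is essentially free, even though its specific choice affects $A$'s output.

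The main obstacle is tuning the recursion: the $\delta$ budget must survive $\log^* m$ levels of composition, which is precisely why the statement demands $\delta = 1/(1000 n^2 \log n)$ rather than a constant. Across each level $\delta$ multiplies by $\mathrm{poly}(n)$, and the packing argument requires $\log m' \cdot \delta$ to stay well below $1$; balancing these two constraints is where the $\log^*$ dependence crystallizes. Everything else --- the base case, post-processing, the accuracy-to-confidence boosting, and conversion back to the PAC formulation --- is bookkeeping once the recursive step is in hand.
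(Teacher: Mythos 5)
The paper does not prove this statement: it is Theorem~1 of \citet{alon2019private}, imported verbatim, and the paper treats it as a black box to plug into \Cref{thm:precise-lower-bound}. So there is no ``paper's own proof'' to compare against; what you have written is a sketch of the proof from the cited literature (\citet{alon2019private}, building on \citet{BunNSV15}).

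As a sketch of that external proof, your overall structure is right --- reduce DP PAC learning of thresholds to the interior point problem (IPP), then prove an $\Omega(\log^* m)$ lower bound for DP IPP via a recursive domain-shrinking argument. But your step~(i) has a real gap. You propose to draw a pivot $t$ \emph{independently} of $S$ and label each $s_i$ by $h_t$. If $t$ falls outside $[\min(S),\max(S)]$ (which happens with probability that you cannot control, since $t$ is data-independent), the labeled sample is constant and an accurate learner is free to output a threshold nowhere near $[\min(S),\max(S)]$, so no interior point is produced. The actual reduction in \citet{BunNSV15} (Lemma~5.2 and around) labels $S$ by a \emph{data-dependent} rank rule --- roughly, sort $S$ and label the lower half $1$ and upper half $0$ --- which forces any low-error hypothesis to cross inside $[\min(S),\max(S)]$. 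The price is that this labeling is not $1$-sensitive in the obvious way; one then argues that changing one element of $S$ changes the labeled multiset in $O(1)$ entries and invokes group privacy, which is exactly the bookkeeping you waved off. Your step~(ii) is a fair high-level description of the BNSV recursion, though the true argument is considerably more delicate (the $m \to \log m$ shrinkage per sample, the construction of the hard distribution, and the $\delta$ accounting are where essentially all the work lives). In short: right target, right two-phase decomposition, but the reduction in phase~(i) as written would fail and needs the rank-based labeling plus group privacy to be repaired.
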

We are now ready prove \Cref{thm:precise-lower-bound}, which shows that non-Littlestone \cite{littlestone1988learning} classes  cannot be learnt by sublinear sample DP-compression schemes. \Cref{thm:2} follows from \Cref{thm:precise-lower-bound}, since classes that are DP-learnable are exactly the classes with finite Littlestone dimension \cite{bun2020equivalence}.
\begin{theorem}
    \label{thm:precise-lower-bound}
    Let $\mcH$ be a hypothesis class over $\mcX$ that has infinite Littlestone dimension. For $\eps=0.05, \delta=\frac{1}{3000n^2\log n}$, let $\tilde{\mcA}$ be an $(\eps, \delta)$ sample differentially private $(n \to k)$ compression scheme that learns $\mcH$ with error $\alpha$ and failure probability $\frac{1}{32}$. 
    Then $k \ge \frac{1}{68\alpha}$.
\end{theorem}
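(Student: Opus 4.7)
The plan is to reduce the problem to the known DP-learnability lower bound for thresholds. A standard fact (see, e.g.,~\cite{alon2019private}) is that any hypothesis class $\mcH$ with infinite Littlestone dimension contains, as a sub-class, the threshold class $\mcH_m$ over $m$ points for arbitrarily large $m$. I would fix $m$ (to be chosen at the end) and view $\tilde{\mcA}$ restricted to samples realizable by the embedded $\mcH_m$ as an $(\eps,\delta)=(0.05,\frac{1}{3000 n^2\log n})$ sample DP-compression scheme of size $k$, error $\alpha$, and failure probability $\frac{1}{32}$, with sample complexity $n$. Since $\delta\le\frac{1}{64n^2}$ holds for all sufficiently large $n$, Lemma~\ref{lem:reduction} applies and produces a $\bigl(64 k e^{\eps}\alpha,\frac{1}{16},2\eps,3\delta\bigr)$-DP learner $\mcA$ for $\mcH_{m-1}$ of sample complexity $n$. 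Observe that $(2\eps,3\delta)=(0.1,\frac{1}{1000n^2\log n})$ matches exactly the privacy setting of Theorem~\ref{thm:thresholds-lb}.

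Next I would argue by contradiction: suppose $k<\frac{1}{68\alpha}$. A short calculation (using $e^{0.05}<17/16$) shows that the reduction's error $64 k e^{\eps}\alpha$ falls within the regime in which Theorem~\ref{thm:thresholds-lb} yields a non-trivial lower bound on the sample complexity of DP learning $\mcH_{m-1}$. Consequently the hypotheses of Theorem~\ref{thm:thresholds-lb} are satisfied by $\mcA$, forcing $n\ge\Omega\bigl(\log^{*}(m-1)\bigr)$. But $m$ can be taken arbitrarily large---in particular so large that $\log^{*}(m-1)>n$---producing a contradiction. Hence $k\ge\frac{1}{68\alpha}$, as claimed.

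The main obstacle I foresee is the adaptive nature of the compression function $\kappa$: it is allowed to inspect the full input sample before choosing its $k$ indices. This is precisely the feature that makes sample DP-compression qualitatively stronger than semi-DP, and it is where Lemma~\ref{lem:reduction} essentially relies on Theorem~\ref{thm:3}, which tightly controls the expected number of non-dummy indices that end up in the compression set. Aside from this, the remaining work is quantitative bookkeeping: propagating privacy, error, and failure-probability parameters through the reduction to match the precise hypotheses of Theorem~\ref{thm:thresholds-lb}, while simultaneously checking that the realizable distributions on the embedded $\mcH_m$ are honored by $\tilde{\mcA}$.
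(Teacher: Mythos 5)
Your proposal is correct and follows essentially the same route as the paper's proof: embed the threshold class $\mcH_m$ into $\mcH$ (possible for every $m$ since $\mcH$ has infinite Littlestone dimension), apply Lemma~\ref{lem:reduction} to obtain a $(64ke^\eps\alpha,\tfrac{1}{16},0.1,\tfrac{1}{1000n^2\log n})$-DP learner for $\mcH_{m-1}$, invoke the $\Omega(\log^* m)$ lower bound of Theorem~\ref{thm:thresholds-lb}, and let $m\to\infty$ to force a contradiction. You also correctly identify that the crux the reduction must handle is the adaptivity of $\kappa$, which is exactly where Theorem~\ref{thm:3} enters inside Lemma~\ref{lem:reduction}.
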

\begin{proof}
    Because $\mcH$ has infinite Littlestone dimension, for any $m \ge 1$, there exist $\{x_1,\dots,x_m\}$ and $\mcH_m \subseteq \mcH$ such that $\mcH_m$ is the class of thresholds over $\{x_1,\dots,x_m\}$ \cite[Theorem 3]{alon2019private}. Now, $\tilde{\mcA}$ is an $(n \to k)$ sample DP-compression scheme  that learns $\mcH$; in particular, this means that $\tilde{\mcA}$ has sample complexity $n < \infty$, and also that $\tilde{\mcA}$ learns $\mcH_m$ with the same parameters and sample complexity. By \Cref{lem:reduction}, we know that there then exists a $\left(68k\alpha, \frac{1}{16}, 0.1, \frac{1}{1000n^2\log n}\right)$ private learner for $\mcH_m$ with sample complexity $n$. Assume for the sake of contradiction that $k < \frac{1}{68\alpha}$. This means that there exists a $\left(\alpha, \frac{1}{16}, 0.1, \frac{1}{1000n^2\log n}\right)$ private learner for $\mcH_m$ with sample complexity $n$. By \Cref{thm:thresholds-lb}, it must be that $n \ge \Omega(\log^* m)$. Since we can find $\mcH_m \subseteq \mcH$ for any $m \ge 1$, this would mean that $n=\infty$, which is a contradiction. Thus, it must be the case that $k \le \frac{1}{68\alpha}$.
\end{proof}

\subsection{Bounded boosting of empirical measure}
\label{sec:bounded-boosting-of-empirical-measure}
We prove a simplified form of \Cref{thm:3} (with slightly tighter bounds), where we consider the input to be a bit string. \Cref{thm:3} as stated in terms of a general event can be immediately obtained as a corollory by interpreting the bits in the string as indicators for the event (details in \Cref{sec:proofs}).
\begin{lemma}[Bounded Boosting of Empirical Measure]
    \label{lem:dp-boosting-empirical-measure}
    Let $\mcA:\{0,1\}^n \to [n]^k$ be an $(\eps,\delta)$-DP selection mechanism. Let $\mcD$ be the product distribution on $\{0,1\}^n$ where each bit is set to $1$ with probability $p$. %
    For $X \sim \mcD$, let $Z$ denote the fraction of indices in $\mcA(X)$ at which $X$ is 1, i.e., $Z=\frac{1}{k}\sum_{j \in \mcA(X)}\Ind[X_j=1]$. Then, we have that
    \begin{align}
        \label{eqn:dp-selection-mechanism-bounds}
        \frac{p-np(1-p)\delta}{p+(1-p)e^\eps} \le \E[Z] \le \frac{pe^\eps+np(1-p)\delta}{1-p+pe^\eps}.
    \end{align}
\end{lemma}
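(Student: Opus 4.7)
The plan is to reduce $\E[Z]$ to a sum of per-coordinate conditional selection probabilities and then combine two ingredients: the hard budget constraint that $\mcA$ selects exactly $k$ indices on every input, and an $(\eps,\delta)$-DP comparison applied coordinate-by-coordinate. For each $j\in[n]$ I set
\[ q_j \;=\; \Pr\bigl[j\in\mcA(X)\mid X_j=1\bigr], \qquad r_j \;=\; \Pr\bigl[j\in\mcA(X)\mid X_j=0\bigr], \]
where the probability is over both $X\sim\mcD$ and the internal randomness of $\mcA$. Linearity of expectation together with $\Pr[X_j=1]=p$ gives
\[ \E[Z] \;=\; \frac{1}{k}\sum_{j=1}^n \Pr\bigl[j\in\mcA(X),\,X_j=1\bigr] \;=\; \frac{p}{k}\sum_{j=1}^n q_j, \]
while $|\mcA(X)|=k$ a.s.\ yields the budget identity $\sum_{j=1}^n\bigl(p\,q_j+(1-p)\,r_j\bigr)=k$. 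This is what drives both directions of the bound.

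The next step is to relate $q_j$ and $r_j$ via DP. Fix $j$ and condition on $X_{-j}=x_{-j}$; the fully conditioned inputs $(x_{-j},0)$ and $(x_{-j},1)$ are neighbors, so $(\eps,\delta)$-DP applied to the event $\{j\in\mcA\}$ yields
\[ \Pr\bigl[j\in\mcA\mid X_j=1,X_{-j}=x_{-j}\bigr] \;\leq\; e^{\eps}\,\Pr\bigl[j\in\mcA\mid X_j=0,X_{-j}=x_{-j}\bigr]+\delta, \]
together with the symmetric inequality. Because $\mcD$ is a product distribution, $X_j$ is independent of $X_{-j}$, so averaging both sides over $X_{-j}$ preserves the inequalities, producing the clean coordinate-wise relations $q_j\leq e^{\eps}r_j+\delta$ and $r_j\leq e^{\eps}q_j+\delta$.

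To finish, I would substitute these inequalities into the budget identity. For the upper bound, rewrite $q_j\leq e^{\eps}r_j+\delta$ as $r_j\geq e^{-\eps}(q_j-\delta)$, plug into the identity, and solve for $\sum_j q_j$, obtaining
\[ \sum_{j=1}^n q_j \;\leq\; \frac{k\,e^{\eps}+n(1-p)\,\delta}{p\,e^{\eps}+(1-p)}. \]
Multiplying by $p/k$ gives an upper bound on $\E[Z]$ of $\frac{pe^{\eps}+p(1-p)n\delta/k}{pe^{\eps}+(1-p)}$, which is at most the stated $\frac{pe^{\eps}+np(1-p)\delta}{pe^{\eps}+(1-p)}$ since $k\geq 1$. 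The lower bound comes out symmetrically by using $r_j\leq e^{\eps}q_j+\delta$ to upper bound $r_j$ in the identity and then solving for $\sum_j q_j$ from below.

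The main delicate point is the DP step: one must apply $(\eps,\delta)$-DP \emph{pointwise} conditional on $X_{-j}=x_{-j}$ and then average, rather than trying to compare global marginals after perturbing the input distribution. The product structure of $\mcD$ is what makes this averaging step clean; the same argument would require group privacy or a careful coupling if the coordinates of $X$ were correlated. Once the DP inequalities and the budget identity are in hand, what remains is a routine linear optimization of $\sum_j q_j$ subject to a single linear constraint.
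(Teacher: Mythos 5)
Your proposal is correct and follows essentially the same line as the paper: both arguments hinge on the DP inequality relating the per-coordinate selection probabilities conditioned on $X_j=1$ versus $X_j=0$, plus a normalization coming from the mechanism outputting exactly $k$ indices. You organize the bookkeeping differently (aggregating all $k$ positions into $q_j = \Pr[j\in\mcA(X)\mid X_j=1]$ and using a single budget identity, where the paper sums $\sum_i\Pr[I_j=i]=1$ separately for each tuple position $j\in[k]$), and you derive the upper bound directly from the symmetric DP inequality $r_j\le e^\eps q_j+\delta$ rather than by applying the lower-bound argument to $Z'=1-Z$ as the paper does; this is a slightly cleaner, more symmetric route.

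One small caveat worth noting. Your budget identity $\sum_j\bigl(p\,q_j+(1-p)\,r_j\bigr)=\sum_j\Pr[j\in\mcA(X)]=k$ implicitly assumes the $k$ output indices are distinct, whereas the output space is $[n]^k$, and the paper's definition of $Z$ (made explicit in its proof as $Z=\tfrac1k\sum_{\ell=1}^k\sum_{i=1}^n\Ind[I_\ell=i\wedge X_i=1]$) counts each selected index with multiplicity. If repeats are allowed, the set-membership indicator $\Ind[j\in\mcA(X)]$ underestimates both $Z$ and the budget. The fix is to replace it with expected multiplicity $\mu_j=\E[\#\{\ell:I_\ell=j\}\mid X_j=1]$ and $\nu_j$ analogously: the budget identity $\sum_j(p\mu_j+(1-p)\nu_j)=k$ is then exact, the DP comparison applied per position gives $\mu_j\le e^\eps\nu_j+k\delta$, and the $k$'s cancel to recover precisely the stated bound. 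Under a distinctness assumption your version is fine and even a hair tighter, with the $n\delta$ error terms replaced by $n\delta/k$.
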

\begin{proof}[Proof Sketch]
    Let $\mcA(X)=I=(I_1,I_2,\dots,I_k)$ be the tuple of indices selected by the DP mechanism on input $X$. We first write
    $Z = \frac{1}{k}\sum_{j=1}^k\sum_{i=1}^n \Ind[I_j=i \wedge X_i=1]$. Thereafter,
    the main step of the proof uses that the mechanism is private in order to relate the \textit{conditional} probability $\Pr[I_j=i | X_i=1]$ to $\Pr[I_j=i | X_i=0]$ for any $j \in [k]$. Concretely, observe that
    \begingroup
    \allowdisplaybreaks
    \begin{align*}
        &\Pr[I_j=i | X_i=0] = \frac{\Pr[X_i=0 \wedge I_j=i]}{\Pr[X_i=0]} = \frac{\sum_{x \in \{0,1\}^n, x_i=0}\Pr[x]\Pr[I_j=i|x]}{1-p} \\
        &= \frac{\sum_{x \in \{0,1\}^n, x_i=1}\Pr[x^{\otimes i}]\Pr[I_j=i|x^{\otimes i}]}{1-p} 
        \le \frac{\sum_{x \in \{0,1\}^n, x_i=1}\Pr[x]\cdot (e^\eps\Pr[I_j=i|x]+\delta)}{p} \\ %
        &= \frac{e^\eps\sum_{x \in \{0,1\}^n, x_i=1}\Pr[x]\Pr[I_j=i|x]}{p} + \delta
        = e^\eps\cdot \Pr[I_j=i | X_i=1] + \delta,
    \end{align*}
    \endgroup
     where in the fourth inequality, we used that for $x$ having $x_i=1$, $\Pr_\mcD[x^{\otimes i}] = \frac{1-p}{p}\cdot \Pr[x]$, and that $\mcA$ is an $(\eps,\delta)$-DP mechanism. This relation lets us express the joint probability term $\Pr[I_j=i \wedge X_i=1]$ in the expression $\E[Z] = \frac{1}{k}\sum_{j=1}^k\sum_{i=1}^n \Pr[I_j=i \wedge X_i=1]$ simply in terms of $\Pr[I_j=i]$. Thereafter, noticing that $\sum_{i=1}^n \Pr[I_j=i]=1$ yields the result. The complete details are provided in \Cref{sec:proofs}.
\end{proof}

\section{Conclusion}
\label{sec:conclusion}

We study two natural definitions for algorithms satisfying credit attribution. In the context of PAC learning, we provide a characterization of learnability for algorithms that respect these definitions. Our work motivates the further study of these and other related definitions for credit attribution, and opens up interesting technical directions to pursue. However, as mentioned earlier, credit attribution is only part of the much more nuanced problem of copyright protection, and hence, our definitions only capture subtleties involved in the problem in part. With further exploration, and other suitable definitions, we will hopefully be able to ensure that algorithms (especially generative models) appropriately credit the work that they draw upon.

\paragraph{Acknowledgements}
    Shay Moran is a Robert J.\ Shillman Fellow; he acknowledges support by ISF grant 1225/20, by BSF grant 2018385, by an Azrieli Faculty Fellowship, by Israel PBC-VATAT, by the Technion Center for Machine Learning and Intelligent Systems (MLIS), and by the the European Union (ERC, GENERALIZATION, 101039692). 
    Roi Livni is supported by an ERC grant (FOG, 101116258), as well as an ISF Grant (2188 $\backslash$ 20).
    Chirag Pabbaraju is supported by Moses Charikar and Gregory Valiant's Simons Investigator Awards.
    Work of Kobbi Nissim was supported by NSF Grant No.\ CCF2217678 ``DASS: Co-design of law and computer science for privacy in sociotechnical software systems'' and a gift to Georgetown University.

    Views and opinions expressed are however those of the author(s) only and do not necessarily reflect those of the European Union or the European Research Council Executive Agency. Neither the European Union nor the granting authority can be held responsible for them.

\bibliographystyle{abbrvnat}
\bibliography{references}

\newpage
\appendix
\section{Supplementary Proofs}
\label{sec:proofs}

\begin{proof}[Proof of \Cref{claim:reduction-is-private}]
     Consider any 2 neighboring datasets $S=(z_1,\dots,z_i, \dots,z_n)$ and \\ $S'=(z_1,\dots,z'_i, \dots,z_n)$ that differ at index $i$. Here, we are using the shorthand $z_i = (x_i, y_i)$. We want to argue that the distribution of $\mcA(S) =_{2\eps, 3\delta} \mcA(S')$. Let $O$ be any subset of the output space of $\mcA$. Recall that $\mcA$ first constructs the sample $\tilde{S}$ from $S$ and then passes it to the semi-private learner $\tilde{\mcA}$. Then,
    \begin{align}
        \Pr[\mcA(S) \in O] &= \Pr[\mcA(S) \in O | z_i \in \tilde{S}]\Pr[z_i \in \tilde{S}] + \Pr[\mcA(S) \in O | z_i \notin \tilde{S}]\Pr[z_i \notin \tilde{S}] \nonumber \\
        &= \Pr[\mcA(S) \in O | z_i \in \tilde{S}]\Pr[z'_i \in \tilde{S'}] + \Pr[\mcA(S') \in O | z'_i \notin \tilde{S'}]\Pr[z'_i \notin \tilde{S'}] \label{eqn:privacy-argument-decomposition}
    \end{align}
    where we used that the coins that deterine whether $z_i \in S$ (or $z'_i \in \tilde{S'}$) are tossed independently of the data, and that the distribution of $\tilde{S'}$ conditioned on $z'_i \notin \tilde{S'}$, is identical to the distribution of $\tilde{S}$ conditioned on $z_i \notin \tilde{S}$. 
   Hence, we focus on the term $\Pr[\mcA(S) \in O | z_i \in \tilde{S}]$ in \eqref{eqn:privacy-argument-decomposition}. We can decompose this as
    \begin{align}
        \Pr[\mcA(S) \in O | z_i \in \tilde{S}] &= \sum_{\tilde{s}: z_i \in \tilde{s}}\Pr[\mcA(S) \in O | z_i \in \tilde{S}, \tilde{S}=\tilde{s}] \Pr[\tilde{S}=\tilde{s} | z_i \in \tilde{S}] \nonumber \\
        &= \sum_{\tilde{s}': z'_i \in \tilde{s}'}\Pr[\mcA(S) \in O | z_i \in \tilde{S}, \tilde{S}=\tilde{s}] \Pr[\tilde{S}'=\tilde{s}' | z'_i \in \tilde{S}'] \label{eqn:decomposition-in-terms-of-assignments}.
    \end{align}
    Here, for every term in the summation, $\tilde{s}'$ differs from $\tilde{s}$ at exactly one index $i$, and we again used that the coins used to construct $\tilde{S}$ and $\tilde{S}'$ are independent of the data.
    Let $E(\tilde{s})$ be the event that all the $k$ samples chosen by the semi-private learner $\tilde{\mcA}$ when it is given $\tilde{s}$ as input are dummies. Since $\tilde{s}$ and $\tilde{s}'$ differ in exactly one element, because of the special property of the selection mechanism of $\tilde{\mcA}$, we have that
    \begin{align}
        &\Pr[E(\tilde{s})|z_i \in \tilde{S}, \tilde{S}=\tilde{s}] \le e^\eps \cdot \Pr[E(\tilde{s}') | z'_i \in \tilde{S'}, \tilde{S'}=\tilde{s}' ] + \delta \label{eqn:E-private}\\
        &\Pr[\neg E(\tilde{s})|z_i \in \tilde{S}, \tilde{S}=\tilde{s}] \le e^\eps \cdot \Pr[\neg E(\tilde{s}') | z'_i \in \tilde{S'}, \tilde{S'}=\tilde{s}'] + \delta. \label{eqn:neg-E-private}
    \end{align}
    But note that the set of public examples is exactly the same, if $E(\tilde{s})$ and $E(\tilde{s}')$ respectively occur---hence, the learner in $\tilde{\mcA}$ (which is a function of the set of public examples) that operates on the private examples in either case is identical. Furthermore, the sets of private examples themselves differ in exactly one element; we can thus use the privacy guarantees of the learner in $\tilde{\mcA}$ to claim that
    \begin{align}
        \Pr[\mcA(S) \in O | z_i \in \tilde{S}, \tilde{S}=\tilde{s}, E(\tilde{s})] &\le \min\left(1, e^\eps \cdot \Pr[\mcA(S') \in O | z'_i \in \tilde{S'}, \tilde{S'}=\tilde{s'}, E(\tilde{s}')]\right) + \delta. \label{eqn:def1-learner-private}
    \end{align}
    Combining \eqref{eqn:E-private} and \eqref{eqn:def1-learner-private}, we get
    \begin{align}
        &\Pr[\mcA(S) \in O | z_i \in \tilde{S}, \tilde{S}=\tilde{s}, E(\tilde{s})]\cdot\Pr[E(\tilde{s})|z_i \in \tilde{S}, \tilde{S}=\tilde{s}] \nonumber \\
        &\le\left(\min\left(1, e^\eps \cdot \Pr[\mcA(S') \in O | z'_i \in \tilde{S'}, \tilde{S'}=\tilde{s}', E(\tilde{s}')]\right) + \delta\right)\Pr[E(\tilde{s})|z_i \in \tilde{S}, \tilde{S}=\tilde{s}] \nonumber \\
        &\le \min\left(1, e^\eps \cdot \Pr[\mcA(S') \in O | z'_i \in \tilde{S'}, \tilde{S'}=\tilde{s}', E(\tilde{s}')]\right)\Pr[E(\tilde{s})|z_i \in \tilde{S}, \tilde{S}=\tilde{s}] + \delta \nonumber \\
        & \le \min\left(1, e^\eps \cdot \Pr[\mcA(S') \in O | z'_i \in \tilde{S'}, \tilde{S'}=\tilde{s}', E(\tilde{s}')]\right)\left(e^\eps \cdot \Pr[E(\tilde{s}') | z'_i \in \tilde{S'}, \tilde{S'}=\tilde{s}' ] + \delta\right) + \delta \nonumber \\
        & \le e^{2\eps} \cdot \Pr[\mcA(S') \in O | z'_i \in \tilde{S}', \tilde{S}'=\tilde{s}', E(\tilde{s}')]\cdot\Pr[E(\tilde{s}')|z'_i \in \tilde{S'}, \tilde{S'}=\tilde{s}'] + 2\delta. \label{eqn:if-all-dummies-then-close}
    \end{align}
    Now, observe that if $E(\tilde{s})$ does not occur (and correspondingly if $E(\tilde{s}')$ does not occur), then we deterministically out the constant hypothesis in either case, and hence
    \begin{align}
        \Pr[\mcA(S) \in O | z_i \in \tilde{S}, \tilde{S}=\tilde{s}, \neg E(\tilde{s})] = \Pr[\mcA(S') \in O | z'_i \in \tilde{S}', \tilde{S}'=\tilde{s}', \neg E(\tilde{s}')]. \label{eqn:if-non-dummy-then-output-equal}
    \end{align}
    Combining \eqref{eqn:neg-E-private} and \eqref{eqn:if-non-dummy-then-output-equal}, we get
    \begin{align}
        &\Pr[\mcA(S) \in O | z_i \in \tilde{S}, \tilde{S}=\tilde{s}, \neg E(\tilde{s})]\cdot\Pr[\neg E(\tilde{s})|z_i \in \tilde{S}, \tilde{S}=\tilde{s}] \nonumber \\
        &\qquad \le e^\eps \cdot \Pr[\mcA(S') \in O | z'_i \in \tilde{S}', \tilde{S}'=\tilde{s}', \neg E(\tilde{s}')] \cdot \Pr[\neg E(\tilde{s}') | z'_i \in \tilde{S'}, \tilde{S'}=\tilde{s}']  + \delta \label{eqn:if-non-dummy-then-close}
    \end{align}
    Altogether, \eqref{eqn:if-all-dummies-then-close} and \eqref{eqn:if-non-dummy-then-close} give that
    \begin{align*}
        \Pr[\mcA(S) \in O | z_i \in \tilde{S}, \tilde{S}=\tilde{s}] &\le e^{2\eps} \cdot \Pr[\mcA(S') \in O | z'_i \in \tilde{S}', \tilde{S}'=\tilde{s}'] + 3\delta.
    \end{align*}
    Substituting in \eqref{eqn:decomposition-in-terms-of-assignments}, we get
    \begin{align*}
        &\Pr[\mcA(S) \in O | z_i \in \tilde{S}] \le \sum_{\tilde{s}': z'_i \in \tilde{s}'}\Pr[\mcA(S) \in O | z_i \in \tilde{S}, \tilde{S}=\tilde{s}] \Pr[\tilde{S}'=\tilde{s}' | z'_i \in \tilde{S}'] \\
        &\qquad\le \sum_{\tilde{s}': z'_i \in \tilde{s}'}\left(e^{2\eps} \cdot \Pr[\mcA(S') \in O | z'_i \in \tilde{S}', \tilde{S}'=\tilde{s}'] + 3\delta\right)\Pr[\tilde{S}'=\tilde{s}' | z'_i \in \tilde{S}'] \\
        &\qquad\le 3\delta + e^{2\eps}\sum_{\tilde{s}': z'_i \in \tilde{s}'} \Pr[\mcA(S') \in O | z'_i \in \tilde{S}', \tilde{S}'=\tilde{s}'] \Pr[\tilde{S}'=\tilde{s}' | z'_i \in \tilde{S}'] \\
        &\qquad= e^{2\eps} \cdot \Pr[\mcA(S') \in O | z'_i \in \tilde{S}'] + 3\delta.
    \end{align*}
    Finally, substituting back in \eqref{eqn:privacy-argument-decomposition}, we get
    \begin{align*}
        &\Pr[\mcA(S) \in O] \\
        &\le \left (e^{2\eps} \cdot \Pr[\mcA(S') \in O | z'_i \in \tilde{S}'] + 3\delta\right)\Pr[z'_i \in \tilde{S}'] + \Pr[\mcA(S') \in O | z'_i \notin \tilde{S'}]\Pr[z'_i \notin \tilde{S'}] \\
        &\le e^{2\eps} \cdot \left(\Pr[\mcA(S') \in O | z'_i \in \tilde{S}']\Pr[z'_i \in \tilde{S}'] + \Pr[\mcA(S') \in O | z'_i \notin \tilde{S'}]\Pr[z'_i \notin \tilde{S'}]\right) + 3\delta \\
        &= e^{2\eps} \cdot \Pr[\mcA(S') \in O] + 3\delta.
    \end{align*}
    By the same calculations, we also get the bound $\Pr[\mcA(S') \in O] \le e^{2\eps} \cdot \Pr[\mcA(S) \in O] + 3\delta$, completing the proof.
\end{proof}

\begin{proof}[Proof of \Cref{lem:dp-boosting-empirical-measure}]
    Let $\mcA(X)=I=(I_1,I_2,\dots,I_k)$. Note that 
    $$Z = \frac{1}{k}\sum_{j=1}^k\sum_{i=1}^n \Ind[I_j=i \wedge X_i=1],$$
    and hence
    \begingroup
    \allowdisplaybreaks
    \begin{align}
        \E[Z] &= \frac{1}{k}\sum_{j=1}^k\sum_{i=1}^n \Pr_{X, \mcA}[I_j=i \wedge X_i=1] =\frac{1}{k}\sum_{j=1}^k\sum_{i=1}^n \underbrace{\Pr[X_i=1]}_{=p} \cdot \Pr[I_j=i | X_i=1] \nonumber \\
        &= \frac{p}{k} \cdot \sum_{j=1}^k\sum_{i=1}^n \Pr[I_j=i | X_i=1]. \label{eqn:expected-dummies}
    \end{align}
    \endgroup
    Now, for any $x \in \{0,1\}^n$, let $x^{\otimes i}$ denote $x$ with its $i^\text{th}$ bit flipped.  Then, observe that
    \begingroup
    \allowdisplaybreaks
    \begin{align*}
        &\Pr[I_j=i | X_i=0] = \frac{\Pr[X_i=0 \wedge I_j=i]}{\Pr[X_i=0]} = \frac{\sum_{x \in \{0,1\}^n, x_i=0}\Pr[x]\Pr[I_j=i|x]}{1-p} \\
        &= \frac{\sum_{x \in \{0,1\}^n, x_i=1}\Pr[x^{\otimes i}]\Pr[I_j=i|x^{\otimes i}]}{1-p} 
        \le \frac{\sum_{x \in \{0,1\}^n, x_i=1}\Pr[x]\cdot (e^\eps\Pr[I_j=i|x]+\delta)}{p} \\ %
        &= \frac{e^\eps\sum_{x \in \{0,1\}^n, x_i=1}\Pr[x]\Pr[I_j=i|x]}{p} + \delta
        = e^\eps\cdot \Pr[I_j=i | X_i=1] + \delta,
    \end{align*}
    \endgroup
    where in the fourth inequality, we used that for $x$ having $x_i=1$, $\Pr_\mcD[x^{\otimes i}] = \frac{1-p}{p}\cdot \Pr[x]$, and that $\mcA$ is an $(\eps,\delta)$-DP mechanism. Hence, we have that
    \begingroup
    \allowdisplaybreaks
    \begin{align}
        &\Pr_{X, \mcA}[I_j=i] = \Pr[X_i=0] \cdot \Pr[I_j=i|X_i=0] + \Pr[X_i=1] \cdot \Pr[I_j=i|X_i=1] \nonumber \\
        &\qquad\qquad\le \Pr[X_i=0] \cdot (e^\eps \cdot \Pr[I_j=i|X_i=1]+\delta) + \Pr[X_i=1] \cdot\Pr[I_j=i|X_i=1] \nonumber \\
        &\qquad\qquad= \left(p + e^\eps\left(1-p\right)\right)  \Pr[I_j=i|X_i=1] + (1-p)\delta \\ 
        \implies\quad&  \Pr[I_j=i|X_i=1] \ge \frac{\Pr[I_j=i]-(1-p)\delta}{p+e^\eps(1-p)}. \label{eqn:conditional-prob-lb} 
    \end{align}
    \endgroup
    Substituting \eqref{eqn:conditional-prob-lb} in \eqref{eqn:expected-dummies}, we get
    \begin{align}
        \E[Z] &\ge \frac{p}{k(p + e^\eps(1-p))} \cdot \left(\sum_{j=1}^k\sum_{i=1}^n \Pr[I_j=i] - nk(1-p)\delta\right). \label{eqn:up-until-sum-probs-k}
    \end{align}
    Finally, note that $\sum_{i=1}^n \Pr[I_j=i]=1$ for any $j$.
    Substituting in \eqref{eqn:up-until-sum-probs-k}, we have shown the desired lower bound
    \begin{align*}
        \E[Z] \ge \frac{p-np(1-p)\delta}{p+e^\eps(1-p)}.
    \end{align*}
    For the upper bound, we repeat the above analysis with $Z'=\frac{1}{k}\sum_{j \in \mcA(X)} \Ind[X_j=0]$, to obtain
    \begin{align*}
        \E[Z'] \ge \frac{(1-p)-np(1-p)\delta}{1-p+pe^\eps}.
    \end{align*}
    But note that $Z' = 1-Z$, and hence
    \begin{align*}
        \E[Z] &= 1 - \E[Z'] \le \frac{pe^\eps+np(1-p)\delta}{1-p+pe^\eps}.
    \end{align*}
\end{proof}

\begin{proof}[Proof of \Cref{thm:3}]
    Recall that $E \subseteq \mcZ$ is an event satisfying $\mcD(E)=p$ for the given distribution $\mcD$ over $\mcZ$. Let $\mcD|E$ denote the distribution $\mcD$ conditioned on the event $E$, and let $\mcD|\neg E$ denote the distribution $\mcD$ conditioned on the complement of event $E$. Assume for the sake of contradiction that either $\E[Z] > \frac{pe^\eps+np(1-p)\delta}{1-p+pe^\eps}$ or $\E[Z] < \frac{p+np(1-p)\delta}{p+(1-p)e^\eps}$. Then, consider an algorithm $\mcB$, that takes as input a bit string $Y$ from a product distribution on $\{0,1\}^n$, where each bit is independently set to 1 with probability $p$. Given such an input string $Y$, the algorithm constructs a sequence $S = \{z_1,\dots,z_n\}$, where $z_i \sim \mcD|E$ if $Y_i=1$, and $z_i \sim \mcD | \neg E$ otherwise. Thus, $S$ is exactly distributed as $D^n$. $\mcB$ then passes $S$ to the DP sample compression scheme $M$, which selects a compression set $\kappa(S)=(i_1,\dots,i_k)$---this is the tuple of indices that $\mcB$ outputs too. Note that because the compression function $\kappa$ is an $(\eps,\delta)$-DP mechanism, $\mcB$ is also an $(\eps,\delta)$-DP mechanism with respect to its input. To see this, consider two neighboring bit strings $y,y'$, such that $y_i=1$ and $y'_i=0$. We will show that $\Pr[\mcB(y) \in O] \le e^\eps \cdot \Pr[\mcB(y')\in O] + \delta$, and the same calculations will give the bound with $y,y'$ swapped.
    \begin{align}
        \Pr[\mcB(y) \in O] &= \sum_{z_{-i}}\Pr[z_{-i}] \sum_{z_i \in E}\Pr[z_i|E]\Pr[\mcA(z_{-i}\circ z_i) \in O] \label{eqn:1}
    \end{align}
    Now, for any $z'_i \in \neg E$, we know (since $\kappa$ is an $(\eps,\delta)$-DP mechanism) that
    \begin{align*}
        \Pr[\mcA(z_{-i}\circ z_i) \in O] &\le e^\eps \cdot \Pr[\mcA(z_{-i}\circ z'_i) \in O] + \delta,
    \end{align*}
    and hence
    \begin{align}
        \Pr[\mcA(z_{-i}\circ z_i) \in O] &\le e^\eps \cdot \sum_{z'_i \in \neg E}\Pr[z'_i|\neg E]\Pr[\mcA(z_{-i}\circ z'_i) \in O] + \delta. \label{eqn:2}
    \end{align}
    Substituting \eqref{eqn:2} in \eqref{eqn:1} gives that
    \begin{align*}
        \Pr[\mcB(y) \in O] &\le e^\eps \cdot \sum_{z_{-i}}\Pr[z_{-i}] \sum_{z'_i \in \neg E}\Pr[z'_i|\neg E]\Pr[\mcA(z_{-i}\circ z'_i) \in O] + \delta \\
        &= e^\eps \cdot \Pr[\mcB(y')\in O] + \delta.
    \end{align*}
    Now, by our assumption, either $\E[Z] > \frac{pe^\eps+np(1-p)\delta}{1-p+pe^\eps}$ or $\E[Z] < \frac{p+np(1-p)\delta}{p+(1-p)e^\eps}$. But this means that either $\E\left[\sum_{j=1}^k \Ind[Y_{i_j}=1]\right] > \frac{pe^\eps+np(1-p)\delta}{1-p+pe^\eps}$ or $\E\left[\sum_{j=1}^k \Ind[Y_{i_j}=1]\right] < \frac{p+np(1-p)\delta}{p+(1-p)e^\eps}$. Thus, $\mcB$ is an $(\eps, \delta)$-DP selection mechanism that violates the bounds in \Cref{lem:dp-boosting-empirical-measure}, and hence our assumption is false.
\end{proof}

\section{A DP sample compression scheme based on Randomized Response}
\label{sec:randomized-response}

\begin{definition}[Randomized response]
    \label{def:randomized-response}
    Let $\RR:\{0,1\}^n \to [n]^k$ be the randomized response selection mechanism defined as follows. Given $x \in \{0,1\}^n$, $\RR$ flips each bit of $x$ independently with probability $\frac{1}{1+e^\eps}$ to obtain $\tilde{x}$. Let $S =\{i \in [n]: \tilde{x}_i=1\}$ and $S'=[n] \setminus S$. Further, let $|S|=t$. If $t \ge k$, then $\RR$ outputs a uniformly random subset of $k$ indices from $S$, ordered arbitrarily. Otherwise, it arbitrarily orders $S$, and outputs $S \circ T$, where $T$ is a uniformly random subset of $k-t$ indices chosen from $S'$ (and ordered arbitrarily), and $\circ$ denotes concatenation.
\end{definition}

\begin{claim}[$\RR$ boosts empirical measure optimally]
    \label{claim:rr-boosting}
    In the setting of \Cref{lem:dp-boosting-empirical-measure}, let $\delta=0$ and let $\mcA$ be the randomized response mechanism $\RR$ (\Cref{def:randomized-response}) . Then, %
    \begin{align}
        \label{eqn:rr-boosting-bound}
        \E[Z] &\ge \left(1-kn^k \cdot \exp \left(\frac{(k-n)(1-p+pe^\eps)}{1+e^\eps}\right)\right) \cdot \frac{pe^\eps}{1-p+pe^\eps}.
    \end{align}
\end{claim}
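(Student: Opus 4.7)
The plan is to condition on whether enough bits of $\tilde X$ are one. The leading factor $\frac{pe^\eps}{1-p+pe^\eps}$ in \eqref{eqn:rr-boosting-bound} is exactly the posterior $\Pr[X_i = 1 \mid \tilde X_i = 1]$ under randomized response with flip probability $1/(1+e^\eps)$, and the marginal $q^\ast := \Pr[\tilde X_i = 1] = \frac{1-p+pe^\eps}{1+e^\eps}$ matches the exponent in the error factor. So I expect the lower bound to arise as $\Pr[|S|\geq k]\cdot \frac{pe^\eps}{1-p+pe^\eps}$, where $S = \{i : \tilde X_i = 1\}$.

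First I would write $\E[Z] \geq \E[Z\cdot \Ind[|S|\geq k]] = \E[Z \mid |S|\geq k]\cdot \Pr[|S|\geq k]$, which is valid since $Z \geq 0$. On the event $|S|\geq k$, the mechanism outputs a uniformly random $k$-subset of $S$, so every selected index $i$ has $\tilde X_i = 1$. By the tower rule,
\[
\E[Z \mid |S|\geq k] = \tfrac{1}{k}\,\E\!\left[\textstyle\sum_{i\in\RR(X)}\Pr[X_i=1 \mid \tilde X] \,\Big|\, |S|\geq k\right].
\]
Because the pairs $(X_i, \tilde X_i)$ are mutually independent across $i$, $\Pr[X_i=1\mid\tilde X] = \Pr[X_i=1\mid\tilde X_i]$; and whenever $i\in\RR(X)$ on this event, $\tilde X_i=1$, so the summand equals $\frac{pe^\eps}{1-p+pe^\eps}$. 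All $k$ summands coincide, yielding $\E[Z\mid |S|\geq k] = \frac{pe^\eps}{1-p+pe^\eps}$ exactly.

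Next I would lower bound $\Pr[|S|\geq k]$. Since $|S|\sim \mathrm{Bin}(n, q^\ast)$, I would apply a union bound over $(n-k+1)$-subsets on which $\tilde X$ vanishes:
\[
\Pr[|S|\le k-1] \le \binom{n}{k-1}(1-q^\ast)^{n-k+1} \le n^{k-1}e^{-q^\ast(n-k)} \le kn^k \exp\!\left(\tfrac{(k-n)(1-p+pe^\eps)}{1+e^\eps}\right),
\]
using $\binom{n}{k-1}\le n^{k-1}$, $1-q^\ast \le e^{-q^\ast}$, and $n^{k-1}\le kn^k$. Combining the two steps yields \eqref{eqn:rr-boosting-bound}.

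The main delicate point is the conditional identity used above, namely $\Pr[X_i=1 \mid \tilde X,\ i\in\RR(X),\ |S|\geq k] = \Pr[X_i=1\mid \tilde X_i=1]$. This rests on two facts: (i) the $(X_i, \tilde X_i)$ pairs are mutually independent, so conditioning on all of $\tilde X$ reveals no more information about $X_i$ than $\tilde X_i$ alone does; and (ii) the selection rule is a function of $\tilde X$ and independent randomness, so given $\tilde X$ the output $\RR(X)$ is independent of $X$. I do not expect any real obstacle beyond this bookkeeping---the binomial tail bound is standard and the claim's loose form can be handled crudely.
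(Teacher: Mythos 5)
Your proof is correct and follows essentially the same strategy as the paper's: flip bits to get $\tilde X$, lower-bound the probability that at least $k$ entries of $\tilde X$ are $1$, and then compute the posterior $\Pr[X_i=1\mid\tilde X_i=1]=\frac{pe^\eps}{1-p+pe^\eps}$ for the selected coordinates. The one place you diverge is cosmetically but genuinely cleaner: you condition directly on the event $\{|S|\ge k\}$ and apply the tower rule, noting that given $\tilde X$ the selection is independent of $X$, which immediately gives $\E[Z\mid |S|\ge k]=\frac{pe^\eps}{1-p+pe^\eps}$ and the prefactor $\Pr[|S|\ge k]\ge 1-\beta$ with $\beta=n^{k-1}e^{-q^\ast(n-k)}$. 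The paper instead introduces $Z_1=\sum_{i\in I}\Ind[Y_i=1]$ and $Z_0=k-Z_1$, bounds $\E[Z_0]\le k\beta$, and invokes Markov to conclude $\Pr[\text{all selected bits are }1]\ge 1-k\beta$---a detour that loses a factor of $k$ (and an extra factor of $n$, since the paper's $\beta$ is $n^k e^{-\alpha(n-k)}$ rather than your $n^{k-1}e^{-q^\ast(n-k)}$). You then deliberately weaken $n^{k-1}$ to $kn^k$ to land on the exact constant in the claim, so the two proofs are equivalent in conclusion, but your intermediate bound is tighter and your conditioning step is more direct. Your justification of the conditional identity (independence of the $(X_i,\tilde X_i)$ pairs; the selector is a function of $\tilde X$ and fresh randomness) is exactly the right point to make explicit.
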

\begin{remark}
    Observe that when $k = o\left(\frac{n}{\log n}\right)$ and $n$ gets large, the expression in the parentheses approaches 1. Thus, we can conclude that randomized response attains the upper bound from \Cref{lem:dp-boosting-empirical-measure} when $\delta=0$.
\end{remark}

\begin{proof}
    Recall that for $X \sim \mcD$, randomized response first constructs $Y$ by flipping each bit of $X$ with probability $\frac{1}{1+e^\eps}$. That is, the distribution of $Y$ is the product distribution where each $Y_i$ is 1 with probability $p\cdot \frac{e^\eps}{1+e^\eps} + (1-p)\cdot \frac{1}{1+e^\eps} = \frac{1-p+pe^\eps}{1+e^\eps} := \alpha$. Let $S=\{i \in [n]: Y_i=1\}$. We first claim that with high probability, $|S| \ge k$.

    To see this, note that
    \begin{align*}
        \Pr[|S| \ge k] &= 1 - \Pr[|S| < k] \\
        &= 1 - \Pr[\exists S' \in [n]: |S'| > n-k, Y_i=0\;\forall i \in S'] \\
        &\ge 1 - \Pr[\exists S' \in [n]: |S'| = n-k+1, Y_i=0\;\forall i \in S'] \\
        &\ge 1-\binom{n}{k-1}(1-\alpha)^{n-k+1} \\
        &\ge 1-\underbrace{n^k \cdot e^{-\alpha(n-k)}}_{:=\beta},
    \end{align*}
    where we denote the tail probability by $\beta$. Note that since we assume $k = o\left(\frac{n}{\log n}\right)$, $\beta = o(1)$.
    
    Let $I$ be the tuple of  $k$ indices that $\RR$ outputs (note that all these indices are always distinct). Recall that, if $|S| \ge k$, then $Y_i=1$ for all $i \in I$. Let $Z_1 = \sum_{i \in I}\Ind[Y_i=1]$ and $Z_0= \sum_{i \in I}\Ind[Y_i=0]$. Then we have that
    \begin{align*}
        &\E[Z_1] \ge k \cdot \Pr[|S| \ge k] \ge k\left(1-\beta\right) \\
        \implies \qquad & \E[Z_0] \le k\beta.
    \end{align*}
    Markov's inequality then gives us that $\Pr[Z_0 \ge 1] \le k\beta$. Thus, with probability at least $1-k\beta$, we have that $Y_i=1$ for all (distinct) indices output by $\RR$: let $G$ denote this event. %

    Finally, let $Z = \frac{1}{k}\sum_{i \in I}\Ind[X_i=1]$. Then, we have that

    \begin{align}
        \E[Z] &\ge \Pr[G] \cdot \E[Z | G] \nonumber \\
        &\ge (1-k\beta) \cdot \E[Z | G] \nonumber \\
        &= (1-k\beta) \cdot \sum_{i_1,\dots,i_k} \Pr[I=\{i_1,\dots,i_k\} | G] \cdot \E[Z|G, I=\{i_1,\dots,i_k\}]. \label{eqn:before-conditional}
    \end{align}
    But observe that
    \begin{align*}
        \E[Z|G, I=\{i_1,\dots,i_k\}] &= \frac{1}{k}\sum_{j \in \{i_1,\dots,i_k\}} \Pr[X_j=1 | Y_j=1] \\
        &= \frac{1}{k}\sum_{j \in \{i_1,\dots,i_k\}} \frac{\Pr[X_j=1 \wedge Y_j=1]}{\Pr[Y_j=1]} \\
        &= \frac{1}{k}\sum_{j \in \{i_1,\dots,i_k\}} \frac{p \cdot \frac{e^\eps}{1+e^\eps}}{p \cdot \frac{e^\eps}{1+e^\eps} + (1-p)\cdot \frac{1}{1+e^\eps}} \\
        &= \frac{pe^\eps}{1-p+pe^\eps}.
    \end{align*}
    Substituting in \eqref{eqn:before-conditional} and plugging in the expression for $\beta$, we get the desired bound
    \begin{align*}
        \E[Z] &\ge \left(1-kn^k \cdot \exp \left(\frac{(k-n)(1-p+pe^\eps)}{1+e^\eps}\right)\right) \cdot \frac{pe^\eps}{1-p+pe^\eps}.
    \end{align*}
\end{proof} 

\end{document}